\begin{document}
\title{Solar: $L_0$ solution path averaging for fast and accurate variable selection\\ in high-dimensional data}

\author{Ning Xu\thanks{n.xu@sydney.edu.au} \and Timothy C.G. Fisher\thanks{tim.fisher@sydney.edu.au}}
\date{School of Economics, University of Sydney, Australia}

\maketitle

\begin{abstract}
We propose a new algorithm for variable selection in high dimensional and large scale data, \emph{subsample-ordered least-angle regression (solar)}, and its coordinate descent generalization, \emph{solar-cd}. Solar re-constructs lasso paths using the $L_0$ norm and averages the resulting solution paths across subsamples. Path averaging retains the ranking information of the informative variables while averaging out sensitivity to high dimensionality, improving variable selection stability, efficiency, and accuracy. Using the same numerical optimzers as lasso does, solar can be generalized to different lasso variants. We prove that: (i) with a high probability, path averaging perfectly separates informative variables from redundant variables on the average $L_0$ path; (ii) solar variable selection is consistent and accurate; and (iii) the probability that solar omits weak signals is controllable for finite sample size. Using simulations, examples, and real-world data, we demonstrate the following advantages of solar: (i) solar yields, with less than $1/3$ of the lasso computation load, substantial improvements over lasso in terms of the sparsity (64-84\% reduction in redundant variable selection) and accuracy of variable selection; (ii) compared with the lasso safe/strong rule and variable screening, solar largely avoids selection of redundant variables and rejection of informative variables in the presence of complicated dependence structures and harsh settings of the irrepresentable condition; (iii) the sparsity and stability of solar conserves residual degrees of freedom for data-splitting hypothesis testing, improving the accuracy of post-selection inference on weak signals with limited $n$; (iv) replacing lasso with solar in bootstrap selection (e.g., bolasso or stability selection) produces a multi-layer variable ranking scheme that improves selection sparsity and ranking accuracy with the computation load of only one lasso realization; and (v) given the computation resources, solar bootstrap selection is substantially faster (98\% lower computation time) than the theoretical maximum speedup for parallelized bootstrap lasso (confirmed by Amdahl's law). The efficiency of bootstrap solar makes cross validation computationally affordable for optimizing the bootstrap selection threshold even in large scale and high dimensional data.
\end{abstract}

\keywords{Variable selection, sparsity, computation time, complicated dependence structure, lasso rules, irrepresentable condition, bolasso, subsampling selection, variable screening.}

\spacingset{1.4}


\pagenumbering{arabic}

\section{Introduction}

Recent innovations to lasso-type algorithms \citep{efronall04, friedman2007pathwise, friedman2010regularization} have largely addressed selection of redundant variables, rejection of informative variables, and poor performance under high multicollinearity in high dimensional ($p>n$) and large scale data (large $p$ and large $n$). However, in alleviating old problems, the innovations have revealed new challenges.

Bootstrap variable selection [e.g., \citet{bach2008bolasso}, \citet{meinshausen2010stability}, \citet{wang2011random}, and \citet{mameli2017estimating}] markedly improves variable selection sparsity and inference accuracy, yet it requires repeating lasso and its variants (often with cross-validation) on hundreds of bootstrap subsamples to average the variable selection results or the inference results. \citet{xu2012asymptotic} and Sections~\ref{subsection:suml1} and \ref{subsection:comp} below illustrate that bootstrap selection methods exponentially increase computation load, limiting applicability in large scale data such as DNA sequencing, image recognition, fMRI and MRI data of the neuroimaging, and natural language processing (where both $p$ and $n$ are often over $10,000$). More seriously, choosing the bootstrap variable selection threshold, which is often set based on field experience or simulations, remains an unsolved issue. \citet{bach2008bolasso} and \citet[Figure~2]{huang2014stat} illustrate that a pre-defined threshold may omit informative variables (low power) and select redundant variables (high false discovery rate) in both high and low dimensions.

One strategy to improve lasso selection sparsity without increasing computation burden is to use a post-selection rule to screen variables selected by lasso. Post-lasso selection rules [e.g., the `safe rule' \citep{ghaoui2010safe} and the `strong rule' \citep{tibshirani2012strong}] are capable of reducing the number of variables to enhance computational efficiency in lasso. However, recent research \citep{wang2014safe, zeng2017efficient} and Section~\ref{section:example} suggest both rules may be prone to rejecting informative variables, selecting redundant variables, or proposing repeated modifications (e.g., rejecting a variable in an early round and adding it back in a later round).

Data-splitting hypothesis tests are another way to screen variables selected by lasso \citep{wasserman2009high, meinshausen2009p,romano2019multiple, diciccio2020exact}. The original data are divided into two: one part for variable selection, the other part for testing. However, to improve test power, data splitting is repeated on each bootstrap subsample, raising similar computational concerns as bootstrapping variable selection \citep{bach2008bolasso}. \citet{diciccio2020exact} also argue that because data splitting reserves some of the data for variable selection, it reduces the degrees of freedom for testing on the remaining data, presenting a challenge to detect weak signals when sample size is limited.

Specifically designed to address the challenges of high dimensional data, the variable screening algorithm \citep{fan2008sure, hall2009using,hall2009usingb, li2012robust, li2012feature} ranks the absolute values of unconditional correlations between each covariate and the response variable, selecting only the top-ranked variables. However, \citet{fan2008sure}, \citet{barut2016conditional}, and Section~\ref{section:example} below show that variable screening also suffers from selection of redundant variables and rejection of informative variables when the dependence structures are complicated.

According to \citet{friedman2001elements, weisberg04}, forward selection was historically dismissed in high-dimensional spaces due to inefficiency and sensitivity to sampling randomness, multicollinearity, noise, and outliers due to the iterative refitting of the residual. \citet{tibshirani2015general} illustrates through simulation that (i) forward selection may produce similar generalization errors to lasso-type estimators for fitted models and (ii) that forward selection is computationally competitive to lasso in different applications (image de-noising, matrix completion, etc.). However, \citet{tibshirani2015general} does not suggest any solution to a range of issues for forward selection or lasso (solved by lars), including instability of variable selection, selection of redundant variables, lack of robustness to the irrepresentable condition and complicated dependence structures, or sensitivity to sampling randomness, multicollinearity, noise, and outliers. Moreover, \citet{tibshirani2015general} demonstrates the computation speedup through comparison without providing any rigorous analysis.

\subsection{Main results}

To address issues above, we propose a new forward selection algorithm, \emph{subsample-ordered least-angle regression (solar)}, and its coordinate-descent generalization, \emph{solar-cd}.

Solar re-constructs lasso paths using the $L_0$ norm and averages the resulting solution paths across subsamples. Path averaging retains the ranking information of the informative variables while averaging out sensitivity to high dimensionality, improving variable selection stability, efficiency, and accuracy. Using the same numerical optimizers as lasso does, solar can be easily generalized to many lasso variants. Under the \citet{zhang09} general framework of forward selection, we prove that: (i) with a high probability, path averaging perfectly separates informative variables from redundant variables on the average $L_0$ path; (ii) solar variable selection is consistent and accurate under the general framework of forward selection; and (iii) the probability that solar omits weak signals is controllable for finite sample size.

Using simulations, examples, and real-world data, we demonstrate the following advantages of solar: (i) solar yields, with less than $1/3$ of the lasso computation load, substantial improvements over lasso in terms of the sparsity (64-84\% reduction in redundant variable selection), stability, and accuracy of variable selection; (ii) compared with the lasso safe/strong rule and variable screening, solar largely avoids selection of redundant variables and rejection of informative variables in the presence of complicated dependence structures and harsh settings of the irrepresentable condition; (iii) the sparsity of solar conserves residual degrees of freedom for data-splitting hypothesis testing, improving the efficiency and accuracy of post-selection inference for weak signals with limited $n$; (iv) replacing lasso with solar in subsampling selection (e.g., the bootstrap lasso or stability selection) produces a multi-layer variable ranking scheme that improves selection sparsity and ranking accuracy with the computation load of only one lasso realization; and (v) Given the computation resources, solar bootstrap is substantially faster (98\% lower computation time) than the theoretical maximum speedup for parallelized bootstrap lasso (confirmed by Amdahl's law). The efficiency of bootstrap solar makes cross validation computationally affordable for optimizing the bootstrap selection threshold even in large scale and high dimensional data. We provide a parallel computing package for solar (\texttt{solarpy}) that uses a Python interface and an Intel MKL Fortran/C++ compiler in a supplementary file and dedicated \href{https://github.com/isaac2math/solarpy}{Github page}.

The paper is organized as follows. In Section~\ref{section:algo}, we introduce the solar algorithm, show the theoretical properties of path avergaing and solar, explain the coordinate descent generalization of solar, and discuss generalizations of solar to variants of lasso. In Section~\ref{section:adv}, we use examples to demonstrate the advantages of solar over lasso, the safe/strong rules, and variable screening . In Section~\ref{section:comp}, we use simulations to demonstrate the advantages of solar over lasso-type algorithms in terms of variable selection sparsity, accuracy, and computation load. In Section~\ref{section:application}, we use real-world data to show that the improvements from solar are feasible in the presence complicated dependence structures, while lasso and elastic net [the lasso variant alleged \citep{zou2005regularization, jia2010model} to have the best selection accuracy and sparsity under multicollinearity] completely lose sparsity. The proofs of the properties of solar are in Supplementary Material~A. The \texttt{solarpy} code and raw simulation results are in Supplementary Material~B.


\section{The Solar algorithm \label{section:algo}}

The key to solar lies in the parameterization of the solution path. For any forward selection method, \citet[Theorem~2]{zhang09} shows that the earlier a variable enters the solution path, the more likely it is to be informative. Thus, an accurate and stable ordering of variables in the solution path may help to identify the informative variables. Since we focus on accuracy, the only relevant feature of the regression coefficients in the solution path is whether $\beta_i = 0$ at each stage. Thus, solar parameterizes the lasso path (or, more generally, any forward selection path) using the $L_0$ norm.
\begin{definition}[$L_0$ solution path]
  Define the \textbf{$L_0$ solution path} on $\left( Y, X \right)$ to be the order that least angle regression includes variables across all stages. For example, if the least angle regression includes $\mathbf{x}_3$ at stage 1, $\mathbf{x}_2$ at stage 2 and $\mathbf{x}_1$ at stage 3, the corresponding $L_0$ path is the ordered set $\left\{ \mathbf{x}_3, \mathbf{x}_2, \mathbf{x}_1 \right\}$.
  \label{def:solution_path}
\end{definition}

\subsection{Solar optimized by least angle regression}

The solar algorithm involves two steps: \emph{parameterizing and averaging $L_0$ paths} and \emph{selecting variables on the average $L_0$ path}.

\subsubsection{$L_0$ path parameterizing and averaging}

The solution path is the foundation of variable (feature) selection in $L_p$-regularized linear modelling. The first step in solar is to improve the robustness of the solution path to high dimensional issues such as multicollinearity, complicated dependence structures, noise, weak signals, etc. In particular, there are two major concerns.
\begin{itemize}
  \item Computation efficiency: computation load is a central concern in subsampling-based model averaging. Because bootstrap methods (e.g., bootstrap lasso) require hundreds of lasso repetitions to average out variable selection issues in high dimensions, they are computationally expensive with large $n$ and large $p$. Thus, improving selection performance and reducing the number of repetitions would go a long way to reducing computation load.
  \item Averaging efficiency: the $L_1$ lasso solution path (solved by lars) is essentially a piecewise linear function $\beta = g(\lambda)$, which is easy to average. By contrast, it is not obvious how to average the $L_0$ path because it is an ordered set of rankings. If we average the ranks each $\mathbf{x}_i$ enters the path in large $p$ problems, a weak signal (i.e., an $\mathbf{x}_i$ with a small but non-zero $\left\Vert \beta_i \right\Vert_1$ in the population) may occasionally be ranked at a later stage, returning a large stage value, and exerting outlier influence on the stage value averaging. In other words, to accurately average solution paths using as few subsamples as possible, we need a parameterization method for the $L_0$ path that is more robust to outliers in ranking $\mathbf{x}_i$.
\end{itemize}

\noindent
Our solution to these concerns is the $\widehat{q}$ \emph{method}, summarized in Algorithm~\ref{algo:APE-lar} and illustrated in Figure~\ref{fig:q_demo}. For solution path averaging, instead of using the stage value each $\mathbf{x}_i$ enters the solution path, which varies in the range $\left[0, +\infty \right)$, Algorithm~\ref{algo:APE-lar} uses $\widehat{q}^{\,k}_i$, which normalizes the stage value into the range $\left[ 0,1 \right]$. The concentration inequalities for empirical processes show that averaging $\widehat{q}^{\,k}_i$ across subsamples converges much faster and is much more stable than averaging the stage values of the $\mathbf{x}_i$.

\smallskip
\begin{algorithm}[ht]

  \SetKwData{Left}{left}\SetKwData{This}{this}\SetKwData{Up}{up}
  \SetKwFunction{Union}{Union}\SetKwFunction{FindCompress}{FindCompress}
  \SetKwInOut{Input}{input}\SetKwInOut{Output}{output}

  \smallskip
  \Input{$\left( Y, X \right)$.}

  divide the original sample equally into $K$ folds and generate $K$ subsamples $\left\{ \left( Y^k, X^k \right) \right\}^{K}_{k=1}$ by removing one fold in turn from $\left( Y, X \right)$\;

  set $\widetilde{p} = \min\left\{ n\left(K-1\right)/K, p \right\}$\;

  \For{ k := 1 to K, stepsize = 1 \nllabel{outer_averaging_start} }{

    run an unrestricted least angle regression (or any forward selection algorithm) on $\left( Y^k, X^k \right)$ and record the order of variable inclusion at each stage\;
    \nllabel{inner_averaging_start}

    define $\widehat{q}^k = \mathbf{0} \in \mathbb{R}^p$\;

    $\forall i,l \in \mathbf{N}^+$, if $\mathbf{x}_i$ is included at stage $l$ and excluded at $l-1$, set $\widehat{q}^k_i= (\widetilde{p} + 1 - l) / \widetilde{p}$, where $\widehat{q}^k_i$ is the $i$\textsuperscript{th} entry of $\widehat{q}^k$\;
    \nllabel{inner_averaging_end}

    }

  $\widehat{q} := \frac{1}{K} \sum_{k=1}^{K} \widehat{q}^k$\; \nllabel{outer_averaging_end}

  \Return $\widehat{q}$

\caption{$\widehat{q}$ method: parameterizing and averaging $L_0$ solution paths} \label{algo:APE-lar}

\end{algorithm}


\begin{figure}[ht]
  \centering
  \includegraphics[width=0.66\paperwidth]{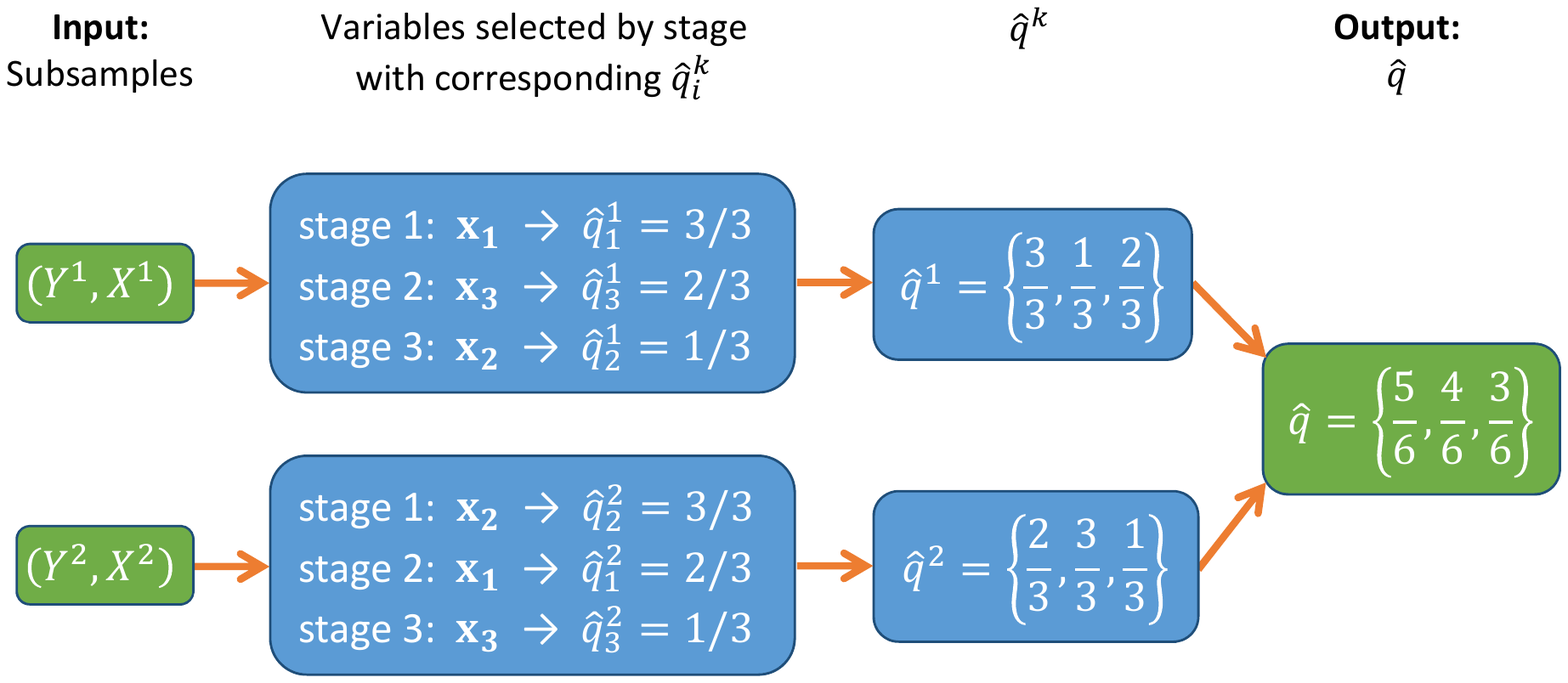}
  \caption{Computation of $\widehat{q}$ on 2 subsamples by least angle regression.}
  \label{fig:q_demo}
\end{figure}

After the subsamples are created, lines~\ref{inner_averaging_start}-\ref{inner_averaging_end} of Algorithm~\ref{algo:APE-lar} compute $\widehat{q}^k$, which summarizes the order that least angle regression includes each $\mathbf{x}_i$ across all stages (see Figure~\ref{fig:q_demo}). The unrestricted least angle regression ranks variables by the stage they enter the solution path. As shown in line~\ref{inner_averaging_end} of Algorithm~\ref{algo:APE-lar} and Figure~\ref{fig:q_demo}, variables included at earlier stages have larger $\widehat{q}^k_i$ values: the first variable included is assigned $1$, the last is assigned $1/\widetilde{p}$, while the rejected variables are assigned $0$ (which occurs only when $p > n$). Thus, the $L_0$ solution path is obtained by ranking the $\mathbf{x}_i$ according to their $\widehat{q}^k_i$ values.

\citet[Theorem 2]{zhang09} implies that, on average, variables with the largest $\widehat{q}^k_i$ values are more likely to be informative. The $\widehat{q}^k_i$ may be sensitive in high-dimensional spaces to multicollinearity, sampling randomness, and noise. In these circumstances, a redundant variable may be included at an early stage in some $\left( Y^k, X^k \right)$ subsample. Algorithm~\ref{algo:APE-lar} reduces the impact of sensitivity in the $\widehat{q}^k_i$ by computing $\widehat{q} := \frac{1}{K} \sum_{k=1}^{K} \widehat{q}^k$ and ranking the $\mathbf{x}_i$ according to $\widehat{q}_i$ (the $i$\textsuperscript{th} entry in $\widehat{q}$), to arrive at the average $L_0$ solution path. The average $L_0$ solution path is formally defined as follows.


\begin{definition}[average $L_0$ solution path]
  Define the \textbf{average $L_0$ solution path} of least angle regression on $\left\{ \left( Y^k, X^k \right) \right\}_{k=1}^{K}$ to be the (decreasing) rank order of the $\mathbf{x}_i$ variables based on their corresponding $\widehat{q}_i$ values. For example, in Figure~\ref{fig:q_demo}, the $\widehat{q}_i$ for $\mathbf{x}_1$, $\mathbf{x}_2$ and $\mathbf{x}_3$ are, respectively, $\widehat{q}_1 = 5/6$, $\widehat{q}_2 = 4/6$ and $\widehat{q}_3 = 3/6$. Thus, the average $L_0$ solution path may be represented as an ordered set $\left\{ \mathbf{x}_1, \mathbf{x}_2, \mathbf{x}_3 \right\}$.
  \label{def:L_0_solution_path}
\end{definition}


To justify theoretically the $\widehat{q}$ method, we use the \citet{zhang09} framework to derive the theoretical properties of path averaging (see Appendix~A).

\begin{itemize}
  \item Under the \citet{zhang09} conditions, Lemma~\ref{lemma:1} shows that, with a high probability, using $\widehat{q}^{\,k}_i$ ranking for variable selection on $\left( Y^k, X^k \right)$ generates the same theoretical results as the \citet{zhang09} forward selection method.
  \item Under a similar stopping condition to \citet{zhang09}, Lemma~\ref{lemma:2} shows that, with a high probability, there exists a threshold $c^k$ for the $L_0$ path on $\left( Y^k, X^k \right)$ such that $\widehat{q}^{\,k}_i \geqslant c^k$ for informative $\mathbf{x}_i$ and $\widehat{q}^k_i < c^k$ for redundant $\mathbf{x}_i$
  \item Using Lemma~\ref{lemma:2}, Lemma~\ref{lemma:3} shows that there exists a threshold $c = \sum_{i=k}^{K} c^k/K$ for the average $L_0$ path such that $\widehat{q}_i \geqslant c$ for informative $\mathbf{x}_i$ and $\widehat{q}_i < c$ for redundant $\mathbf{x}_i$ with large probability.
\end{itemize}

\subsubsection{Variable selection on the average $L_0$ path}

The solar algorithm is constructed on the aveage $L_0$ path and summarized in Algorithm~\ref{algo:solar}. We present solar under the generic framework of forward regression and can easily adapt it to least angle regression, forward or backward selection algorithms.


\begin{algorithm}[ht]

  \SetKwData{Left}{left}\SetKwData{This}{this}\SetKwData{Up}{up}
  \SetKwFunction{Union}{Union}\SetKwFunction{FindCompress}{FindCompress}
  \SetKwInOut{Input}{input}\SetKwInOut{Output}{output}

  \smallskip
  Randomly select 20\% of the sample points as the validation set; denote the remaining points as the training set\;

  Estimate $\widehat{q}$ using Algorithm~\ref{algo:APE-lar} on the training set and compute $Q(c) = \left\{ \mathbf{x}_j \; \vert \; \widehat{q}_j \geqslant c, \forall j\right\}$ for all $c \in \left\{ 1, 0.98, \ldots, 0.02, 0 \right\}.$

  Run an OLS regression of each $Q(c)$ on $Y$ using the training set and find $c^*$, the value of $c$ that minimizes the validation error\;

  Compute the OLS coefficients of $Q(c^*)$ on $Y$ using the whole sample.

  \caption{Subsample-ordered least-angle regression (solar) \label{algo:solar}}
\end{algorithm}


In Algorithm~\ref{algo:solar}, variables are included into forward regression according to their rank order in the average $L_0$ solution path, represented by $\left\{ Q(c) \vert c = 1, 0.98, \ldots, 0\right\}$ in Algorithm~\ref{algo:solar}. We use $\widehat{q}$ from Algorithm~\ref{algo:APE-lar} to generate a list of variables $Q \left( c \right) = \left\{ \mathbf{x}_j \; \vert \; \widehat{q}_j \geqslant c, \forall j \leqslant p \right\}$. For any $c_1 > c_2$, $Q\left(c_1\right) \subset Q\left(c_2\right)$, implying a sequence of nested sets $\left\{ Q(c) \vert c = 1, 0.98, \ldots, 0\right\}$. Each $c$ denotes a stage of forward regression. For a given value of $c$, $Q(c)$ denotes the set of variables with $\left\Vert \beta_i \right\Vert_0=1$ on average and $Q(c) - Q(c - 0.02)$ is the set of variables with $\left\Vert \beta_i \right\Vert_0$ just turning to $1$ at $c$. Therefore, $\left\{ Q(c) \vert c = 1, 0.98, \ldots, 0\right\}$ is the average $L_0$ solution path of Definition~\ref{def:L_0_solution_path}. Variables that are more likely to be informative have larger $c$ values in $Q(c)$ and will be selected first by the solar algorithm.

Using the \citet{zhang09} framework and Lemmas~\ref{lemma:2} and~\ref{lemma:3}, we derive the following theoretical results for variable selection (see Appendix~A).

\begin{itemize}
  \item Theorem~\ref{thm:1} shows that solar variable selection is $L_0$ consistent under similar sparse eigenvalue and irrepresentable conditions as have been used to prove  lasso consistency.
  \item Under similar assumptions to \citet{zhang09}, Lemmas~\ref{lemma:4} and \ref{lemma:5} show that the number of omitted informative $\mathbf{x}_i$ and the probability of selecting at least one redundant $\mathbf{x}_i$ are restricted by sample size, the sparse eigenvalue condition, and the stopping condition.
\end{itemize}

The key difference between solar and the lasso-type estimators, and the source of the advantages of solar, is solution path averaging.
\begin{itemize}
  \item The difference between solar and lasso is that solar averages the solution path. Lasso and solar both use the solution path for variable selection. Lasso and its variants focus on optimizing the shrinkage parameter $\lambda$ (via cross validation), leaving aside concerns about the reliability of the lasso path in high dimensions. Optimizing $\lambda$ on an unreliable path renders variable selection difficult. By contrast, solar  prioritizes averaging the solution path, which not only averages out  path unreliability in high dimensions, but also ranks all the informative variables at the start of the average $L_0$ path (as shown in Lemma~\ref{lemma:2} and~\ref{lemma:3}). Hence, with a high probability, the variable selection algorithm needs only to analyze the variables at the start of the average $L_0$ path, making selection accurate and efficient.
  \item The difference between solar and lasso-related bootstrap selection (e.g., bolasso) is in how they average the variable selection algorithm. Given the $\lambda$ value (optimal or not), lasso-related bootstrap selection averages the selection \emph{results} across subsamples. Thus, bootstrap selection requires hundreds of repetitions to average out the instability and redundancy of lasso variable selection \citep{bach2008bolasso}. By contrast, solar averages solution \emph{paths}, which solves most of the lasso instability and redundancy issues, returning a more reliable path (the average $L_0$ path). Variable selection along a reliable path substantially reduces the likelihood that solar selects redundant variables or omits informative variables. As a result, solar-related bootstrap selection (e.g., bootstrap solar or solar stability selection) requires only 3-5 repetitions to outperform hundreds of lasso-related bootstrap repetitions (see Section~\ref{subsection:comp} for details).
\end{itemize}

\subsection{Solar optimized by coordinate descent}

The solar algorithm can easily be generalized to use coordinate descent. For lasso, least angle regression or coordinate descent generates the same solution path parameterized by the $\beta_i$ and the shrinkage parameter $\lambda$. Thus, to reprogram solar to use coordinate descent, we simply replace Algorithm~\ref{algo:APE-lar} with Algorithm~\ref{algo:APE-cd}, which records the order of variable selection along the coordinate descent solution path.

\smallskip
\begin{algorithm}[ht]

  \SetKwData{Left}{left}\SetKwData{This}{this}\SetKwData{Up}{up}
  \SetKwFunction{Union}{Union}\SetKwFunction{FindCompress}{FindCompress}
  \SetKwInOut{Input}{input}\SetKwInOut{Output}{output}

  \smallskip
  \Input{$\left( Y, X \right)$.}

  generate $K$ subsamples $\left\{ \left( Y^k, X^k \right) \right\}^{K}_{k=1}$ by randomly remove $1/K$ of observations in $\left( Y, X \right)$\;

  set $\widetilde{p} = \min\left\{ n_{\mathrm{sub}}, p \right\}$ \;

  \For{ k := 1 to K, stepsize = 1 \nllabel{outer_averaging_start3} }{

    denote $\lambda_s$ as the shrinkage parameter value that coordinate descent lasso selects $s$ variables, $\forall s \in \left[ 0, \widetilde{p}\right]$;

    run a pathwise coordinate descent for lasso on $\left( Y^k, X^k \right)$, $\forall \lambda \in \left\{\lambda_0, \lambda_1, \ldots, \lambda_{\widetilde{p}},\right\}$

    record the order of variable inclusion at each $\lambda \in \left\{\lambda_0, \lambda_1, \ldots, \lambda_{\widetilde{p}},\right\}$\;

    define $\widehat{q}^k = \mathbf{0} \in \mathbb{R}^p$\;

    $\forall i,s \in \mathbf{N}^+$, if $\mathbf{x}_i$ is included at $\lambda = \lambda_s$ and excluded at $\lambda_{s-1}$, set $\widehat{q}^k_i= (\widetilde{p} + 1 - s) / \widetilde{p}$, where $\widehat{q}^k_i$ is the $i$\textsuperscript{th} entry of $\widehat{q}^k$\;

  }

  $\widehat{q} := \frac{1}{K} \sum_{k=1}^{K} \widehat{q}^k$\; \nllabel{outer_averaging_end3}

  \Return $\widehat{q}$

\caption{average $L_0$ solution path estimation via coordinate descent \label{algo:APE-cd}}

\end{algorithm}

\begin{figure}[ht]
  \centering
  \includegraphics[width=0.66\paperwidth]{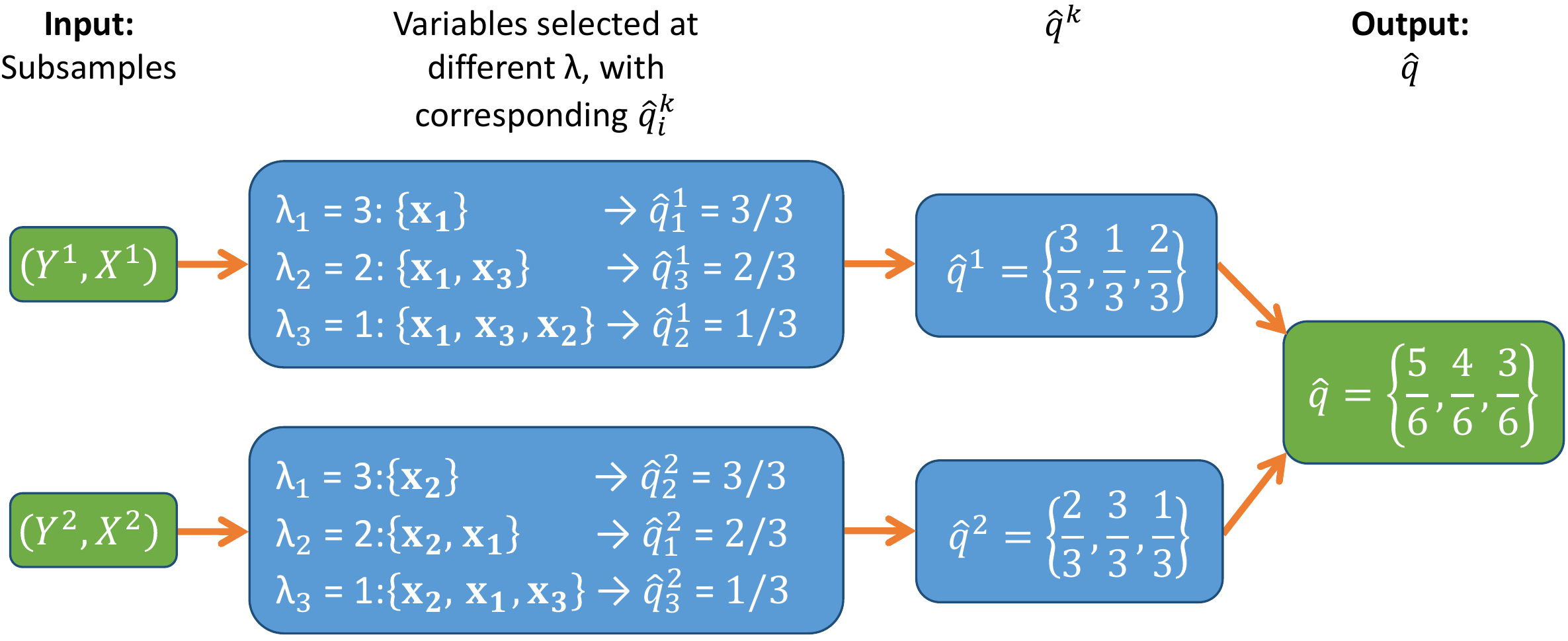}
  \caption{Computation of $\widehat{q}$ on 2 subsamples using coordinate descent.}
  \label{fig:q_demo_3}
\end{figure}

Algorithm~\ref{algo:APE-cd} serves the same purpose as Algorithm~\ref{algo:APE-lar}: to estimate the average $L_0$ path. Algorithm~\ref{algo:APE-cd} uses $\lambda$ to record the order that each variable enters the path. Consider the example in Figure~\ref{fig:q_demo_3}. To re-parameterize the solution path, we denote $\lambda_s$ to be the $\lambda$ value that coordinate descent lasso includes $s$ variables, $\forall s\in \left( 0, \min \left\{ n/2, p \right\} \right]$, giving a sequence of $\lambda$ for grid search. In each subsample $\left( Y^k, X^k \right)$, we train a standard pathwise coordinate descent for lasso, allowing $\lambda$ to increase stepwise within the grid $\left\{\lambda_1, \ldots, \lambda_{ \min \left\{ n/2, p \right\} } \right\}$, where $\lambda_1 \geqslant \ldots \geqslant \lambda_{ \min \left\{ n/2, p \right\} }$. In Figure~\ref{fig:q_demo_3}, when $\lambda \leqslant \lambda_3$ at subsample $\left( Y^1, X^1 \right)$, all three variables are selected in the solution path, implying that $\widehat{q}^1_i \geqslant 1/3$ for all variables. When $\lambda$ increases to $\lambda_2$, only $\{\mathbf{x}_3, \mathbf{x}_1\}$ survive the harsher shrinkage, implying that they should be ranked higher than $\mathbf{x}_2$. As a result, $\widehat{q}^1_1, \widehat{q}^1_3 \geqslant 2/3$ and $\widehat{q}^1_2 = 1/3$. When $\lambda$ reaches $\lambda_3$, only $\{\mathbf{x}_1\}$ remains, leaving $\widehat{q}^1_1 = 3/3$ and $\widehat{q}^1_3 = 2/3$. Applying the same method to each subsample produces the same $\widehat{q}$ as Algorithm~\ref{algo:APE-lar}.

\subsection{Comparison and generalization to lasso variants \label{subsection:variant}}
\label{subsec:variant}

Because solar is trained by least angle regression or coordinate descent, it can easily be extended to several lasso variants:

\begin{itemize}
  \item `Grouped solar' is invoked by forcing specific variables to be simultaneously selected into the solution path;
  \item `Adaptive solar' is obtained by weighting variable rankings in the average $L_0$ path according to their OLS coefficients;
  \item `Solar elastic net' or `fused solar' is derived by replacing the coordinate descent loss function in Algorithm~\ref{algo:APE-cd} with the $L_1$-$L_2$ loss
    \begin{equation}
      \left\Vert Y -X\beta \right\Vert_2^2 + \lambda^{(1)} \left\Vert \beta \right\Vert_1 + \lambda^{(2)} \left\Vert \beta \right\Vert_2^2
    \end{equation}
    or fused loss
    \begin{equation}
      \left\Vert Y -X\beta \right\Vert_2^2 + \lambda^{(1)} \left\Vert \beta \right\Vert_1 + \lambda^{(2)} \sum_{j=2}^{p} \left\vert \beta_j - \beta_{j-1} \right\vert_1.
    \end{equation}
\end{itemize}

Furthermore, many lasso enhancements (e.g., safe/strong rules, post-lasso hypothesis testing) may be applied to solar because they use the same optimization methods. Rather than competing with the lasso enhancements, solar supplements them by improving variable selection performance and computation speed in large-scale applications.

\section{Solar advantages over lasso variants, lasso rules, and variable screening \label{section:adv}}

In this section, we use a series of examples to demonstrate the advantages of the solar algorithm for post-selection hypothesis testing, in the presence of complicated dependence structures, and in terms of its robustness to the \emph{irrepresentable condition} (IRC).

\subsection{Post-selection hypothesis testing}

A major advantage of solar is its amenability to post-selection testing. Because the lasso tests \citep{lockhartall14, taylor2014exact} are based on forward regression, they may be adapted to solar. More interestingly, it is straightforward to adapt the data-splitting tests \citep{wasserman2009high,meinshausen2009p} to solar for weak signal detection. We illustrate this point using Example~1.

\smallskip
\noindent
\textbf{Example 1.} Consider the DGP
\begin{equation}
  Y = \mathbf{x}_0 + 2 \mathbf{x}_1 + 3 \mathbf{x}_2 + 4 \mathbf{x}_3 + 5 \mathbf{x}_4 + \sum_{j=5}^{p} 0 \cdot \mathbf{x}_j + e,
\end{equation}
where $\mathbf{x}_i$, $i=0,\dots,p$, are standard Gaussian variables with pairwise correlations of $0.5$, $e$ is a standard Gaussian noise term, and $p/n=100/100$.

Following \citet[Example~4.1]{romano2019multiple} and \citet{diciccio2020exact}, we conduct data-splitting tests by randomly separating the data into two portions of 50 observations. In the first round, one portion is used for solar or lasso selection and the other for testing. In the second round, the roles of the two portions are reversed. As a result, the p-values of any given variable are uncorrelated across the two rounds. Thus, we may apply Theorem~3.2 of \citet{romano2019multiple} and compute the average p-value across the two rounds to conduct a valid t-test for any selected covariate.

\citet{diciccio2020exact} stresses the importance of retaining residual degrees of freedom to ensure accurate tests. Because solar yields a more sparse and accurate variable selection than lasso does(Section~\ref{section:comp}), it conserves residual degrees of freedom, improving the reliability of post-selection p-values. Figure~\ref{fig:p_value_compare} plots the average p-values for the informative variables $\{\mathbf{x}_0,\ldots,\mathbf{x}_4\}$ from post-solar and post-lasso data-splitting tests using 100 repetitions. While the solar and lasso p-values are less than $0.05$ for the stronger signals $\{\mathbf{x}_1,\ldots,\mathbf{x}_4\}$, more than $25\%$ of the lasso p-values exceed $0.05$ for the weakest signal $\mathbf{x}_0$, implying non-trivial false non-rejection of $H_0$. By contrast, the solar p-value boxplot is very compact for $\mathbf{x}_0$, with only $5$ out of $100$ above $0.05$. Hence, solar p-values are more reliable for detecting weak signals with small $n$ and large $p$.

\begin{figure}[ht]
  \centering
  \includegraphics[width=0.7\paperwidth]{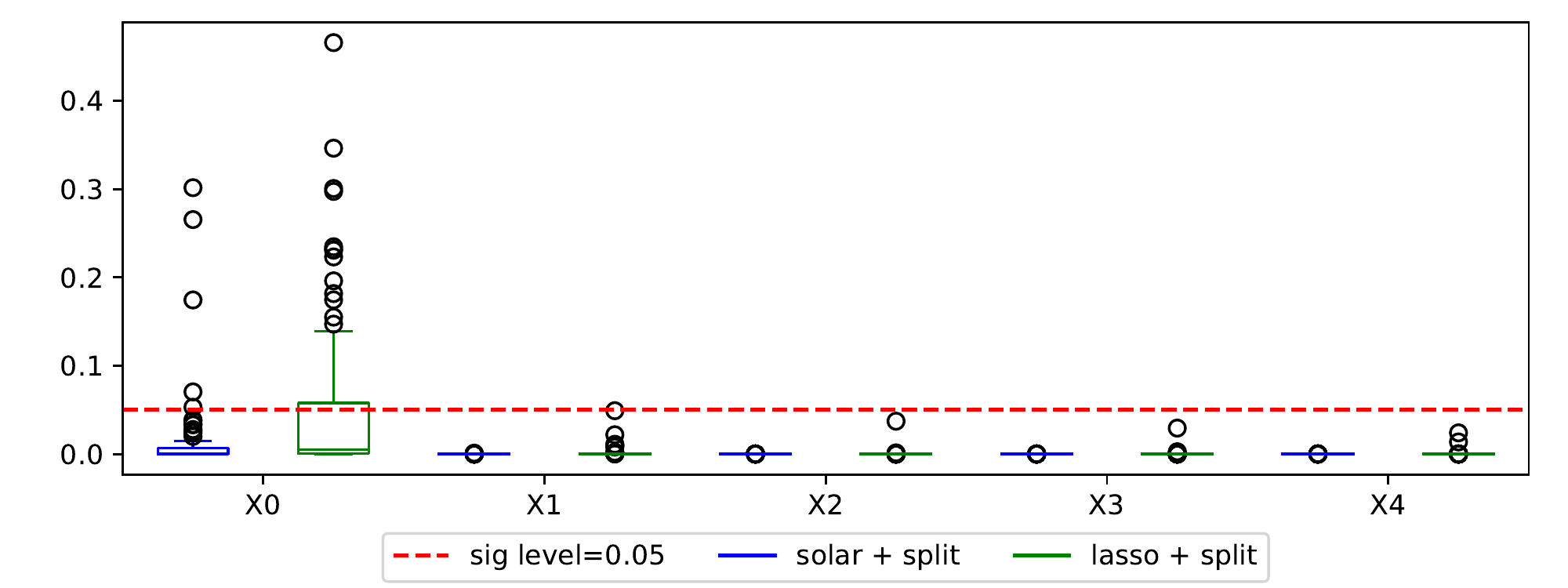}
  \caption{Average p-value boxplots for data-splitting t-tests with solar and lasso.}
  \label{fig:p_value_compare}
\end{figure}

Moreover, the solar $L_0$ path may also assist with the formulation of $H_0$ for $p>n$. Because conserving residual degrees of freedom is so important, tests on the selection (omission) of redundant (informative) variables trigger decisions on which $\beta_i$ to test. \citet[Theorem~2]{zhang09} shows that the earlier a variable enters the $L_0$ path, the more likely it is informative, implying that variables should be tested in rank order. Given the solar path is more robust than lasso path to settings of the irrepresentable condition, sampling noise, multicollinearity, and other issues, it provides more reliable guidance on the order to test the $\beta_i$. $\blacksquare$

\subsection{Complicated dependence structures\label{section:example}}

Another advantage of solar is that the average $L_0$ solution path is more robust to outliers, multicollinearity, and noise in high-dimensional spaces. Thus, solar is likely to be more reliable than other variable selection methods under complicated dependence structures. We illustrate the point with the following two (Bayesian network) examples.

\begin{figure}[ht]
  \centering
  \includegraphics[width=0.35\paperwidth]{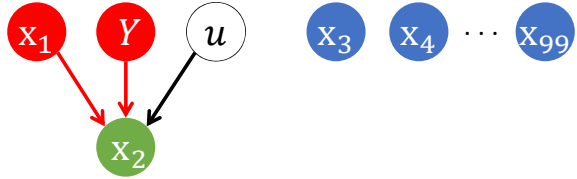}
  \caption{Y is unconditionally uncorrelated with an informative $\mathbf{x}_1$.}
  \label{fig:uncond_example}
\end{figure}

The first example is a common empirical regression problem: \emph{informative variables} that are \emph{unconditionally uncorrelated to} $Y$ in the DGP. In Figure~\ref{fig:uncond_example}, $\mathbf{x}_1$ and $\mathbf{x}_2$ are informative for $Y$, while $\mathbf{x}_1$ and $Y$ are independent. For example, in biostatistics, concussion ($\mathbf{x}_1$) or a brain tumor ($Y$) may cause headaches ($\mathbf{x}_2$), implying that concussion history is when attempting to diagnose a brain tumor. In this setting, Example~2a shows that solar is more reliable than post-lasso rules and variable screening.

\smallskip
\noindent
\textbf{Example 2a.} In Figure~\ref{fig:uncond_example}, there are $100$ variables and $\mathbf{x}_2$ is (causally) generated by its parents $\left\{ \mathbf{x}_1, Y \right\}$ as follows,
\begin{equation}
  \mathbf{x}_2 = \alpha_1 \mathbf{x}_1 + \alpha_2 Y + u,
  \label{eqn:collider_1}
\end{equation}
where $\mathbf{x}_1$ is unconditionally uncorrelated with $Y$, $\mathbf{x}_1$ and $Y$ are both unconditionally and conditionally uncorrelated with the redundant variables $\{\mathbf{x}_3, \ldots, \mathbf{x}_{99}\}$, $\left\{\alpha_1, \alpha_2 \right\}$ are population regression coefficients, and $u$ is a Gaussian noise term. If $Y$ is chosen to be the response variable, the population regression equation is
\begin{equation}
  Y = -\frac{\alpha_1}{\alpha_2} \mathbf{x}_1 + \frac{1}{\alpha_2} \mathbf{x}_2 - \frac{1}{\alpha_2}u.
  \label{eqn:collider_2}
\end{equation}
Note that $\mathbf{x}_1$ and $\mathbf{x}_2$ are both informative variables for $Y$. However, since $\mathbf{x}_1$ is unconditionally uncorrelated with $Y$ in the population, the post-lasso rules [such as the strong rule \citep{tibshirani2012strong} and the safe rule \citep{ghaoui2010safe}] may be prone to rejecting $\mathbf{x}_1$. For a given value of the shrinkage parameter $\lambda$ in grid search, the base strong rule and the safe rule for lasso to reject a selected variable, respectively, satisfies (\ref{eqn:safe_rule}) and (\ref{eqn:strong_rule}):
\begin{eqnarray}
  \left\vert \mathbf{x}_i^T Y \right\vert < & \lambda - \left\Vert \mathbf{x}_i \right\Vert_2 \left\Vert Y \right\Vert_2 \frac{\lambda_{max} - \lambda} {\lambda_{max}} ; \label{eqn:safe_rule} \\
  \left\vert \mathbf{x}_i^T Y \right\vert < & 2\lambda - \lambda_{max} , \label{eqn:strong_rule}
  \label{eqn:post_estmation_rule}
\end{eqnarray}
where the $\mathbf{x}_i$ are standardized and $\lambda_{max}$ is the value of the shrinkage parameter that rejects all the variables. Both rules are based on the unconditional covariance between $\mathbf{x}_i$ and $Y$. For a given value of $\lambda$ (typically selected by CV), lasso will likely select $\mathbf{x}_1$ and $\mathbf{x}_2$ along with redundant variables from $\left\{ \mathbf{x}_3, \ldots, \mathbf{x}_{99} \right\}$ [because the DGP does not violate the IRC]. Since $\mathrm{corr} \left( \mathbf{x}_1, Y \right) = \mathrm{corr} \left( \mathbf{x}_3, Y \right) =  \cdots = \mathrm{corr} \left( \mathbf{x}_{99}, Y \right) = 0$ in the population, the sample value of $\left\vert \mathbf{x}_1^T Y \right\vert$ will be approximately as small as the $\left\vert \mathbf{x}_i^T Y \right\vert$ of any redundant variable. Put another way, $\mathbf{x}_1$ cannot be distinguished from the redundant variables by the value of $\left\vert \mathbf{x}_i^T Y \right\vert$. To ensure $\mathbf{x}_1$ is not rejected by (\ref{eqn:safe_rule}) or (\ref{eqn:strong_rule}), both $\lambda - \left\Vert \mathbf{x}_1 \right\Vert_2 \left\Vert Y \right\Vert_2 \frac{\lambda_{max} - \lambda} {\lambda_{max}}$ and $2\lambda - \lambda_{max}$ must be smaller than $\left\vert \mathbf{x}_1^T Y \right\vert$. However, this will lead to two problems. First, decreasing the right-hand side of (\ref{eqn:safe_rule}) and (\ref{eqn:strong_rule}) will reduce the value of $\lambda$, implying that lasso will select more redundant variables. Second, since $\left\vert \mathbf{x}_1^T Y \right\vert$ will be approximately as small as the $\left\vert \mathbf{x}_i^T Y \right\vert$ of any redundant variable selected by lasso, not rejecting $\mathbf{x}_1$ (by reducing both right-hand side terms) may result in (\ref{eqn:safe_rule}) and (\ref{eqn:strong_rule}) retaining redundant variables.

Variable screening methods \citep{fan2008sure} may also be prone to selecting redundant variables. Screening ranks variables decreasingly based on the absolute values of their unconditional correlations to $Y$, selecting the top $w$ variables (with $w$ selected by CV, bootstrap, or BIC). Since $\mathrm{corr} \left( \mathbf{x}_2, Y \right) \neq 0$ in the population, screening will rank $\mathbf{x}_2$ highly. However, it may not rank $\mathbf{x}_1$ highly because $\mathrm{corr} \left( \mathbf{x}_1, Y \right) = 0$ in the population. Thus, some redundant variables may be ranked between $\mathbf{x}_2$ and $\mathbf{x}_1$, implying that if both $\mathbf{x}_1$ and $\mathbf{x}_2$ are selected, screening will select redundant variables.

The average $L_0$ solution path will not suffer the same problems. For convenience, assume $-\alpha_1 / \alpha_2 > 0$ and $p/n = 100/200$ or smaller. For least angle regression, as $\left\Vert \beta_2 \right\Vert_1$ increases at stage~1 (i.e., as $\mathbf{x}_2$ is `partialled out' of $Y$), the unconditional correlation between $Y - \beta_2 \mathbf{x}_2$ and $\mathbf{x}_1$ will increase above $0$ significantly while the marginal correlation between $Y - \beta_2 \mathbf{x}_2$ and any redundant variable will remain approximately $0$. Thus, in the $L_0$ solution path and, hence, the average $L_0$ solution path, $\mathbf{x}_1$ will be included immediately after $\mathbf{x}_2$ is included. $\blacksquare$

\citet{fan2008sure} and \citet{barut2016conditional} propose two solutions for the problems with variable screening in situations like Example~2a. However,

\begin{itemize}
  \item the first approach \citep[Section~2.2 and~3]{barut2016conditional} assumes the identity of $\mathbf{x}_2$ is known, which is unlikely to be realistic in practical applications. [In Bayesian networks or probabilistic graph modelling, $\mathbf{x}_2$ is known as a \emph{collider}; \citet{barut2016conditional} refer to $\mathbf{x}_2$ as a \emph{hidden signature} variable and denote it by $X_c$];
  \item the second approach \citep[Section~1 and~2.2]{barut2016conditional} suggests randomly trying out several variables to be colliders. The logic is straightforward: randomly trying out a wrong variable (like $\mathbf{x}_2$) to be a collider is harmless because conditioning on that variable will not make $corr(Y,\mathbf{x}_1) \neq 0$, nor will it cause the selection of a redundant variable. Moreover, by repeatedly randomly trying out variables, there is a non-zero probability the correct collider will eventually be uncovered, producing a statistically significant $corr(Y,\mathbf{x}_1) \neq 0$. However, using multiple trials may be inefficient and computationally expensive, especially with high-dimensional data. To improve high-dimensional efficiency, \citet{barut2016conditional} suggests trying out several variables simultaneously. However, if $corr(Y, \mathbf{x}_1) \neq 0$ were discovered after trying out, say, $\left\{\mathbf{x}_2,\mbox{other variables}\right\}$, it would still be necessary to decide which of $\left\{\mathbf{x}_2,\mbox{other variables}\right\}$ are redundant, meaning variable selection is not completed.
\end{itemize}

\medskip

The second example illustrates another common problem in empirical regression: \emph{redundant variables} that are \emph{unconditionally correlated to} $Y$ in the DGP. In Figure~\ref{fig:cond_example}, the problem occurs because $\mathbf{x}_3$ and $Y$ are determined by common variables. For example, house rent ($Y$) and food expenditure ($\mathbf{x}_3$) are both determined by income ($\mathbf{x}_1$) and saving ($\mathbf{x}_2$), yet $\mathbf{x}_3$ is redundant if $\mathbf{x}_1$ and $\mathbf{x}_2$ are used to predict $Y$. In this setting, Example~2b illustrates that the strong rule, base rule, and variable screening methods may struggle to reject the redundant $\mathbf{x}_3$ even when IRC is satisfied. By contrast, solar will be less prone to selecting redundant variables.

\begin{figure}[ht]
    \centering
    \includegraphics[width=0.35\paperwidth]{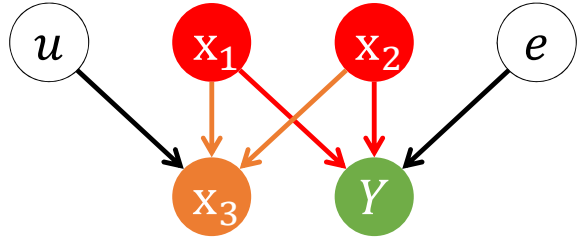}
    \caption{$Y$ is unconditionally correlated with a redundant $\mathbf{x}_3$.}
    \label{fig:cond_example}
  \end{figure}

\smallskip
\noindent
\textbf{Example 2b.} Figure~\ref{fig:cond_example} depicts the following confounding structure,
\begin{equation}
	\begin{cases}
    \mathbf{x}_3 = \frac{1}{3} \mathbf{x}_1 + \frac{1}{3} \mathbf{x}_2 + \frac{\sqrt{7}}{3} u, \\
    Y = \frac{7}{10} \mathbf{x}_1 +  \frac{2}{10} \mathbf{x}_2 +  \frac{\sqrt{47}}{10} e, \\
	\end{cases}
	\label{eqn:example_4}
\end{equation}
where $\mathbf{x}_1$ and $\mathbf{x}_2$ cause both $Y$ and $\mathbf{x}_3$, implying that $\mathbf{x}_3$ is unconditionally correlated to $Y$; $\mathbf{x}_1$, $\mathbf{x}_2$, $u$ and $e$ are independent; $\mathbf{x}_3$ is independent from $e$; $Y$ is independent from $u$; and all variables are standardized.

For large $n$, when the sample correlations are close to their population values, the sample marginal correlations to $Y$ are:
\begin{equation}
  \begin{aligned}
    \mathrm{corr} \left( \mathbf{x}_1, Y \right)  = & \;0.7, \\
    \mathrm{corr} \left( \mathbf{x}_3, Y \right)  = & \;\mathrm{corr} \left( \frac{1}{3} \mathbf{x}_1 + \frac{1}{3} \mathbf{x}_2, \frac{7}{10} \mathbf{x}_1 +  \frac{2}{10} \mathbf{x}_2 \right)
    = 0.3, \\
    \mathrm{corr} \left( \mathbf{x}_2, Y \right)  = & \;0.2. \\
  \end{aligned}
\end{equation}
Because $\mathbf{x}_2$ ranks below $\mathbf{x}_1$ and $\mathbf{x}_3$ in terms of marginal correlations to $Y$, the variable screening method must select all $3$ variables---including the redundant $\mathbf{x}_3$---to avoid omitting $\mathbf{x}_2$. The base strong rule and safe rule may also have difficulty rejecting $\mathbf{x}_3$. Since $\mathrm{corr} \left( \mathbf{x}_3, Y \right)>\mathrm{corr} \left( \mathbf{x}_2, Y \right)$, if lasso selects $\mathbf{x}_3$ and $\mathbf{x}_2$ and the strong (or safe) rule is used to reject $\mathbf{x}_3$, $\mathbf{x}_2$ will also be rejected.

Forward regression, solar, and lasso will not make the same error. Because (\ref{eqn:example_4}) does not violate the IRC, variable-selection consistency of forward regression, lars, and lasso is assured from the theoretical results of \citet{zhang09} and \citet{zhaoyu06}. In forward regression, $\mathbf{x}_1$ will be included  at the first stage. After controlling for $\mathbf{x}_1$, the partial correlations (for large $n$) of both $\mathbf{x}_2$ and $\mathbf{x}_3$ with $Y$ are:
\begin{equation}
  \begin{aligned}
    \mathrm{corr} \left( \mathbf{x}_2, Y \vert \mathbf{x}_1 \right)  = & \;\mathrm{corr} \left( \mathbf{x}_2, \frac{2}{10} \mathbf{x}_2 \right)
    = 0.2, \\
    \mathrm{corr} \left( \mathbf{x}_3, Y \vert \mathbf{x}_1 \right)  = & \;\mathrm{corr} \left( \frac{1}{3} \mathbf{x}_1 + \frac{1}{3} \mathbf{x}_2, \frac{2}{10} \mathbf{x}_2 \right)
    = 0.0667. \\
  \end{aligned}
\end{equation}
Because $\mathrm{corr}(\mathbf{x}_2, Y \vert \mathbf{x}_1)>\mathrm{corr}(\mathbf{x}_3, Y \vert \mathbf{x}_1)$, forward regression will include $\mathbf{x}_2$ not $\mathbf{x}_3$ at the second stage. After controlling for both $\mathbf{x}_1$ and $\mathbf{x}_2$, the remaining variation in $Y$ is due to $e$, which $\mathbf{x}_3$ cannot explain. Thus, CV or BIC will terminate forward regression after the second stage and $\mathbf{x}_3$ will not be selected. Similarly, because solar relies on the average $L_0$ path, it will include $\mathbf{x}_1$ and $\mathbf{x}_2$ but not $\mathbf{x}_3$. $\blacksquare$

\bigskip
Essentially, the strong rule, safe rule, and variable screening struggle in Examples~2a and~2b because they rely on unconditional correlations to $Y$, whereas informative variables in regression analysis are defined in terms of conditional correlations. In many scenarios, unconditional and conditional correlations are aligned. However, when they are not, variable selection based conditional correlation is better placed to select the informative variables.

\citet{fan2008sure} propose redeeming variable screening on $Y$ by first selecting variables with high unconditional correlations to $Y$ and then running a lasso of the residuals on the dropped variables. By contrast, solar completes variable selection in a single pass of conditional correlation ranking, reducing computational costs. Moreover, the \citet{fan2008sure} approach does not solve Example~2b type problems. At the first step, variables with high unconditional correlations to $Y$ will be selected, including the redundant $\mathbf{x}_3$. Selecting redundant variables will be more serious when $Y$ has multiple $\mathbf{x}_3$-like siblings and in complicated dependence structures where multicollinearity results in inaccurate estimates of the coefficients and standard errors in finite samples. In short, solar is likely to be more computationally efficient and better at variable selection in settings with complicated dependence structures.


\subsection{Robustness to the IRC \label{subsection:irc}}

Solar is more robust to different settings of the IRC than the lasso. The IRC is considered to be sufficient and almost necessary for accurate lasso variable selection \citep{zhang09}. Here, we ignore lasso rules and variable screening since, as discussed above, their selection accuracy may be compromised by a reliance on unconditional correlations to $Y$. We define the IRC as in \citet{zhang09}.

\begin{definition}[IRC]
  Given $F \subset \left\{ 1, \ldots, p \right\}$, define $X_F$ to be the $n \times \left\vert F \right \vert$ matrix with only the full set of informative variables. Define
    \begin{align}
    \mu \left( F \right) = & \max \left\{ \left\Vert \left( \left( X_F \right)^T X_F \right)^{-1} \left( X_F \right)^T \mathbf{x}_j \right\Vert_1 \; \vert \; \forall j \not\in F \right\}. \notag
    \end{align}
  Given a constant $1 \geqslant \eta > 0$, the \emph{strong} irrepresentable condition is satisfied if $\mu \left( F \right) \leqslant 1 - \eta$ and the \emph{weak} irrepresentable condition is satisfied if $\mu \left( F \right) < 1$.$\blacksquare$
\end{definition}

\smallskip
\noindent
\textbf{Example 3.} Modify the DGP in Example~2b to match the \citet{zhaoyu06} simulations. Thus, $n = 200$, $p = 50$, and $\{\mathbf{x}_0, \ldots, \mathbf{x}_4, \mathbf{x}_6, \ldots, \mathbf{x}_{50}\}$ are generated from a zero-mean, unit-variance multivariate Gaussian distribution, where all the correlation coefficients are $0.5$. The DGP of $Y$ and $\mathbf{x}_5$ is
\begin{equation}
	\begin{cases}
    \mathbf{x}_5 = \omega \mathbf{x}_0 + \omega \mathbf{x}_1 + \gamma\cdot \sqrt{1 - 2\omega^2} \\
    Y = 2 \mathbf{x}_0 + 3\mathbf{x}_1 + 4 \mathbf{x}_2 + 5 \mathbf{x}_3 + 6 \mathbf{x}_4 + e \\
	\end{cases}
	\label{eqn:dgp_x5}
\end{equation}
where $\omega \in \mathbb{R}$, while $\gamma$ and $e$ are both standard Gaussian noise terms, independent from each other and all the other variables. Compared with Example~2b, this DGP increases the challenge of accurate selection by increasing the number of redundant variables from 1 to 46, $\{\mathbf{x}_5, \ldots, \mathbf{x}_{50}\}$. This DGP also makes it straightforward to control the IRC through $\omega$, which affects the value of $\mu \left( F \right)$.

\begin{figure}
  \centering
  \subfloat[\label{fig:solar_ic_type-II1}$\omega = 1/4,\;\mu\left(F\right)=1/2$, lasso]
  {\includegraphics[width=0.25\paperwidth]{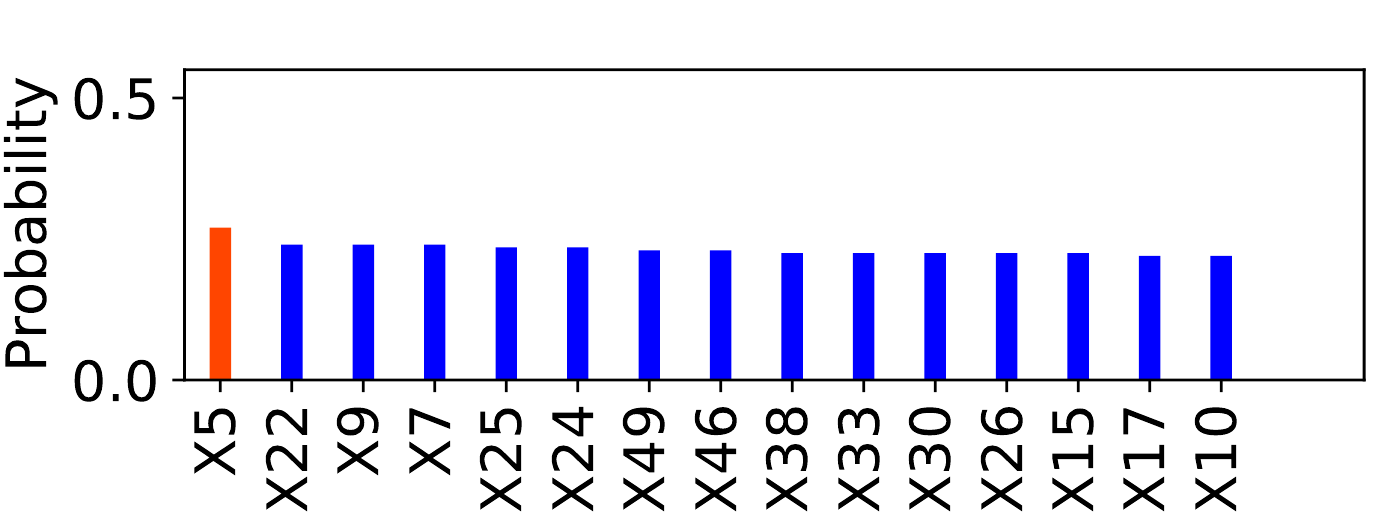}}
  \subfloat[\label{fig:solar_ic_type-II2}$\omega = 1/3,\;\mu\left(F\right)=2/3$, lasso]
  {\includegraphics[width=0.25\paperwidth]{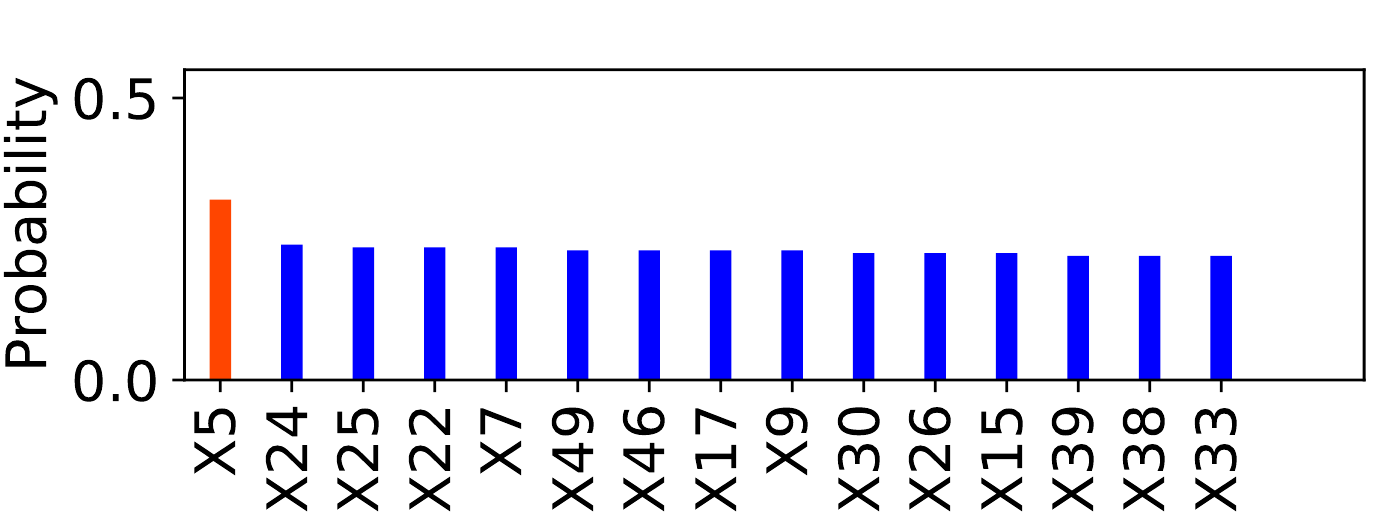}}
  \subfloat[\label{fig:solar_ic_type-II3}$\omega = 1/2,\;\mu\left(F\right)=1$, lasso]
  {\includegraphics[width=0.25\paperwidth]{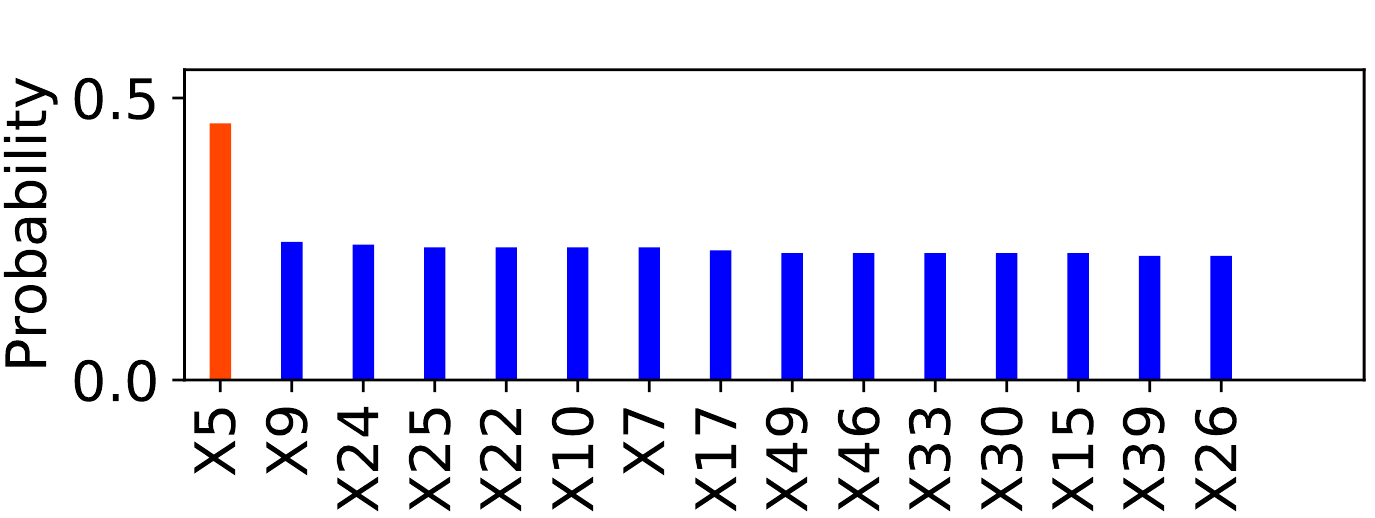}}

  \subfloat[\label{fig:solar_ic_type-II7}$\omega = 1/4,\;\mu\left(F\right)=1/2$, solar]
  {\includegraphics[width=0.25\paperwidth]{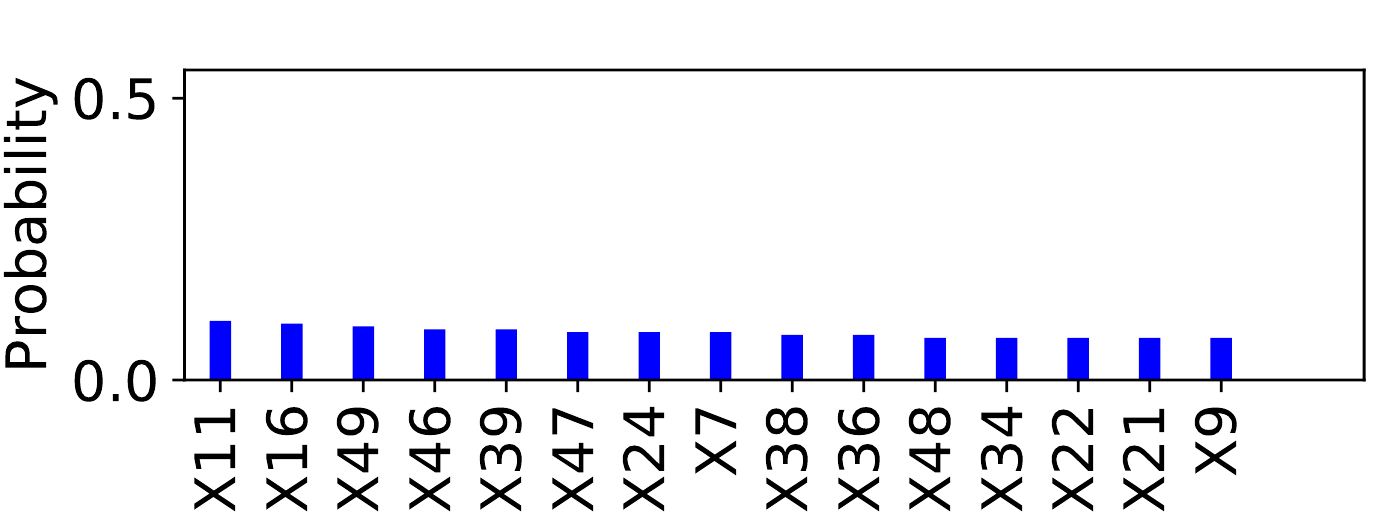}}
  \subfloat[\label{fig:solar_ic_type-II8}$\omega = 1/3,\;\mu\left(F\right)=2/3$, solar]
  {\includegraphics[width=0.25\paperwidth]{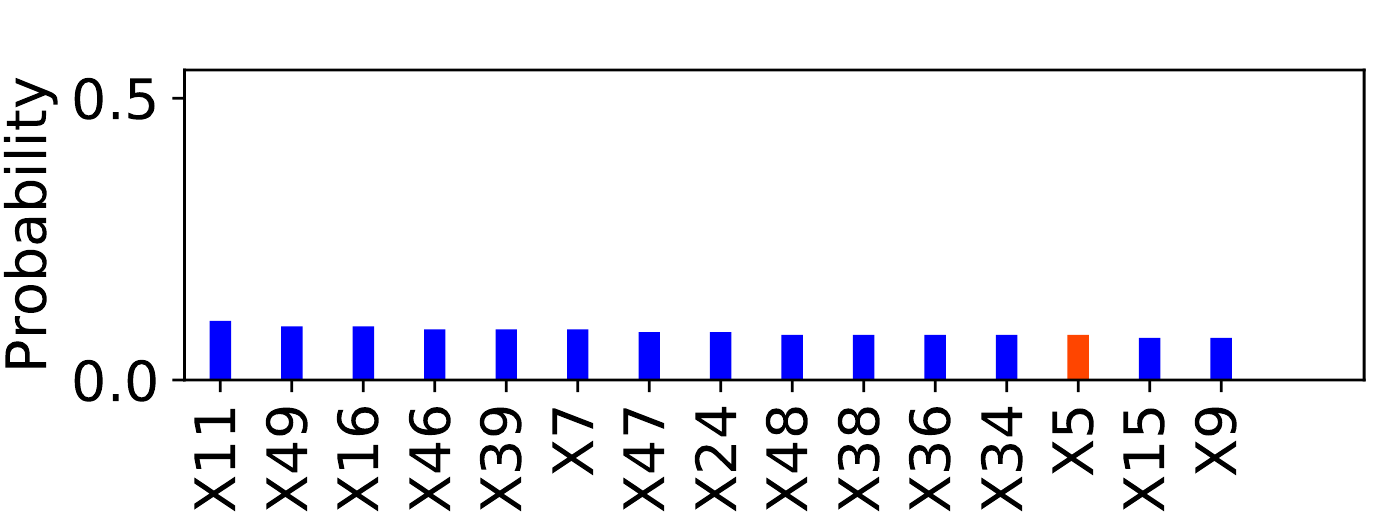}}
  \subfloat[\label{fig:solar_ic_type-II9}$\omega = 1/2,\;\mu\left(F\right)=1$, solar]
  {\includegraphics[width=0.25\paperwidth]{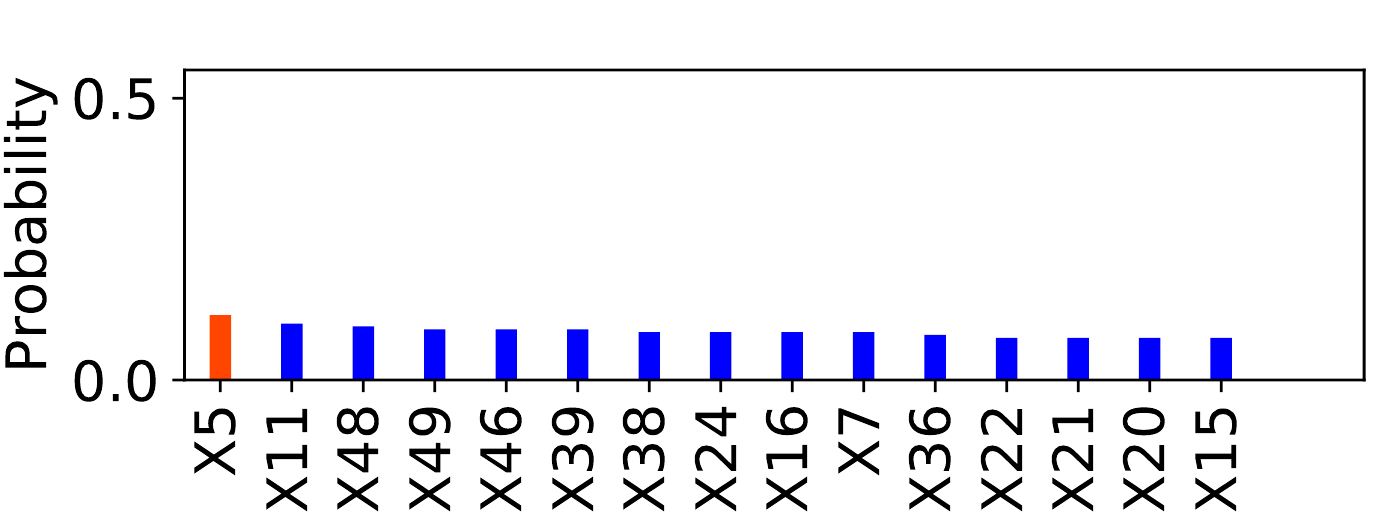}}
  \caption{Probability of including redundant variables (top 15) in simulation~2 ($\mathbf{x}_5$ in orange).}
  \label{fig:solar_ic_type-II}
\end{figure}

In (\ref{eqn:dgp_x5}), the IRC only affects the redundant $\mathbf{x}_5$. Hence, we focus on the probability of incorrectly selecting $\mathbf{x}_5$ in 200 repetitions. By setting $\omega$ to either $1/4$, $1/3$, or $1/2$, the population value of $\mu \left( F \right)$ changes, respectively, to $1/2$, $2/3$, or $1$, gradually increasing the difficulty of rejecting the redundant $\mathbf{x}_5$.

Figure~\ref{fig:solar_ic_type-II} displays the simulation results. When $\mu \left( F \right) = 1/2$, lasso wrongly includes $\mathbf{x}_5$ with probability $0.25$. By contrast, $\mathbf{x}_5$ is not among the top 15 variables selected by solar, implying a probability less than $0.1$. When $\mu \left( F \right)$ increases to $2/3$, the probability lasso includes $\mathbf{x}_5$ increases to around $0.3$. When $\mu \left( F \right)$ increases to $1$ in the population and strong IRC is violated, the probability lasso includes $\mathbf{x}_5$ rises to almost $0.5$. By contrast, the probability solar includes $\mathbf{x}_5$ is below $0.1$ even when $\mu\left(F\right)=1$. The results illustrate that solar is more robust to different settings of the IRC. $\blacksquare$


\section{Solar advantages over subsample variable selection\label{section:comp}}

In this section, we shift our focus to simulation. We demonstrate that: (i) solar offers significant improvements over lasso-type algorithms in terms of variable selection sparsity and accuracy; (ii)~replacing lasso with solar in bootstrap selection drastically reduces the computation load, measured by runtime. We choose the simulation settings so that, as far as possible, the comparisons are fair, representative, and generalizable. Our overall goal is to enable \emph{ceteris paribus} comparisons between solar and state-of-the-art lasso algorithms.

\subsection{Simulation competitors}

We consider a subset of lasso-type algorithms for comparison to solar. Firstly, some lasso modifications (e.g., fused lasso, grouped lasso) are designed to solve specific empirical problems that are not relevant to our paper. Secondly, it may be difficult to determine how much some variants outperform lasso.\footnote{For example, while \citet{jia2010model} show numerically that elastic net has slightly better variable-selection accuracy than lasso, they also find that ``when the lasso does not select the true model, it is more likely that the elastic net does not select the true model either'' (a point we verify in Section~\ref{section:application}). While simulations in \citet{zou2006adaptive} show that adaptive lasso outperforms lasso when $p/n<1$, it requires first computing the OLS estimates of all $\mathbf{x}_i$ coefficients, which is difficult when $p/n>1$.} Since both solar and lasso may be evaluated via least angle regression and coordinate descent, many other lasso modifications can be directly applied to solar, as discussed in Section~\ref{subsec:variant}. We do not consider information criteria for shrinkage parameter tuning. \citet{scikit-learn} points out that information criteria are over-optimistic and require a proper estimation of the degrees of freedom for the solution. Moreover, information criteria are derived asymptotically and tend to break down when the problem is badly conditioned (e.g., $p > n$).\footnote{See \url{https://scikit-learn.org/stable/modules/linear_model\#lasso.html} for details.}

Solar competes with $10$-fold, cross-validated lasso (denoted `lasso' for short), following the \citet{friedman2001elements} simulations that show 10 folds balances the bias-variance trade-off in CV error minimization. We set the number of generated subsamples ($K$) in Algorithm~\ref{algo:APE-lar} to $3$ since $K>3$ has only negligible effects. Because least-angle regression and coordinate descent yield similar selection results for solar and lasso, we combine the lars and coordinate descent results for solar and report only the runtime for lars lasso (see Supplementary Material B).

We also include bootstrap selection algorithms in the comparisons. A bootstrap selection repeats lasso multiple times across bootstrap subsamples to produce a set of averaged (or accumulated) selection results. Given the similarities among lasso bootstrap selection methods, we choose the \citet{bach2008bolasso} bootstrap lasso (\emph{bolasso}) to be the competitor to solar. \citet{bach2008bolasso} proposes two bolasso algorithms: bolasso-H and bolasso-S; both are competitors in the simulations. Bolasso-H selects only variables that are selected in all bootstrap subsamples, i.e., the subsample selection frequency threshold, $f=1$. Bolasso-S selects variables that are selected in 90\% of the bootstrap subsamples ($f=0.9$). \citet{bach2008bolasso} finds that bolasso selection and prediction performance improves with the number of subsamples. To ensure a rigorous challenge for solar, we set the number of bootstrap subsamples in bolasso to $256$, the maximum in the \citet{bach2008bolasso} simulations. Moreover, \citet{meinshausen2010stability} points out that bolasso relies on choosing the $\lambda$ value on bootstrap subsamples. If the $\lambda$ value is unecessarily large on more than $10\%$ of all bootstrap subsamples, bolasso-H and bolasso-S will omit informative variables. Given the fact that the optimal value of $\lambda$ may change substantially in high dimenisons, we use $10$-fold cross validation to tune $\lambda$ in each bootstrap subsample.

We also consider a bootstrap solar selection \emph{(bsolar)}, which executes solar on each bootstrap subsample and computes the selection frequency for each variable across all bootstrap subsamples. To ensure that any performance difference is due to replacing lasso with solar in the bootstrap selection system, bolasso and bsolar use the same subsample selection frequency threshold. Thus, we evaluate 2~versions of bsolar: bsolar-H ($f=1$) and bsolar-S ($f=0.9$). We use the notation bsolar-$m$H and bsolar-$m$S, where $m$ is the number of subsamples used to compute the selection frequency.

\subsection{Simulation settings}

The DGP for the simulations is as follows. The $p$ covariates in $X \in \mathbb{R}^{n \times p}$ are generated from a zero-mean, multivariate Gaussian distribution, with all off-diagonal elements in the covariance matrix equal to~0.5. The first 5 variables in $X$ are informative; the remaining $p-5$ variables are redundant. The response variable $Y \in \mathbb{R}^{n \times 1}$ is:
\begin{equation}
  Y =  2 \mathbf{x}_0 + 3 \mathbf{x}_1 + 4 \mathbf{x}_2 + 5 \mathbf{x}_3 + 6 \mathbf{x}_4  + e,
  \label{eqn:pop_model}
\end{equation}
where $e\in \mathbb{R}^{n \times 1}$ is a standard Gaussian noise term. All data points are independently and identically distributed. Each $\mathbf{x}_i$, $i=1,\ldots,p$, is independent from the noise term $e$, which is standard Gaussian. Simulations are repeated 200 times with fixed Python random generators across simulations.

We vary the data dimensions $p/n$ as follows. In the first block of simulations, $p/n$ approaches $0$ from above, corresponding to the classical $p<n$ setting. In the second block, $p/n$ approaches $1$ from above, corresponding to high dimension settings. In the third block, $p/n=2$ as $\log(p)/n$ slowly approaches $0$, corresponding to ultrahigh dimension settings, i.e., where $(p-n)\rightarrow\infty$.

We compare the performance of solar and lasso in terms of sparsity and accuracy of variable selection and on the runtime. Sparsity is measured by the mean number of selected variables. Discovery accuracy is measured by the mean number of \emph{informative} selected variables. Purge accuracy is measured by the mean number of \emph{redundant} selected variables (equal to sparsity minus discovery accuracy). Runtime is measured by mean CPU time.  The raw simulation results are available in the supplementary file.

\subsection{Programming languages, parallelization, and hardware}

To ensure a credible comparison between solar and the lasso competitors, we choose the hardware and software settings to maximize the computation speed of lasso. We show that, even under the ideal computation environment for lasso, solar exhibits a substantial runtime advantage.

To maximize computation speed, we use \texttt{Numpy}, \texttt{Scipy}, and \texttt{Cython}---all well-known for performance and speed---to outsource all numerical and matrix operations to the Intel Math Kernel Library, currently the fastest and most accurate C++/Fortran library for CPU numerical operations.

To reduce the possibility of CPU and RAM bottlenecks in parallel computing of lasso and bootstrap lasso, we code in Python rather than R. \citet{donoho201750} claims: ``R has the well-known reputation of being less scalable than Python to large problem sizes''. Given the simulations repeat solar, lasso, and bootstrap lasso many times to arrive at representative performance measures, choosing Python over R mitigates the impact of hardware limitations. Computations are executed with an Intel Xeon W-3245 CPU with 3.2GHz base frequency and 64GB RAM, further reducing the possibility of CPU-RAM bottlenecks.

To guarantee the programming quality of the lasso implementation, we source lasso and bootstrap lasso from the Sci-kit learn library \citep{scikit-learn} of efficient machine-learning tools.\footnote{Detail is available at \url{https://scikit-learn.org/stable/}.} Used widely in research and industry, Sci-kit learn also uses \texttt{Numpy}, \texttt{Scipy}, and \texttt{Cython} to delegate all numerical and matrix operations to Fortran/C++.

Lastly, to optimize computation and avoid large overheads, we implement multi-core parallelization. Each realization of lasso requires 10 repetitions of lars or coordinate descent to compute the CV error of each $\lambda$ value. Thus, we design a parallel architecture to assign one repetition per CPU core, maximizing the computation speed for lasso.

\subsection{Comparison of sparsity, accuracy, and time complexity \label{subsection:suml1}}

Table~\ref{table:sim_1} summarizes average selection performance.\footnote{Detailed histograms are available in the supplementary file.} While all competitors always include the 5 informative variables, solar outperforms lasso in terms of sparsity in every $p/n$ scenario, implying superior ability to limit the selection of redundant variables. Notably, as $p/n\rightarrow1$, lasso sparsity deteriorates while solar sparsity improves, further confirming the advantage of path averaging. While the sparsity of all competitors deteriorates as $\log(p)/n\rightarrow0$, solar maintains a clear advantage over lasso.

\begin{table}[ht]
\centering
\caption{Simulation results for sparsity and accuracy.\label{table:sim_1}}
\resizebox{0.98\textwidth}{!}{%
\renewcommand{\arraystretch}{0.7}
\begin{tabular}{l ... ... ...}
  \toprule
  & \multicolumn{3}{c}{$p/n\rightarrow0$}
  & \multicolumn{3}{c}{$p/n\rightarrow1$}
  & \multicolumn{3}{c}{$\log(p)/n\rightarrow0$} \\
  \cmidrule(lr){2-4} \cmidrule(lr){5-7} \cmidrule(lr){8-10}
  & \multicolumn{1}{c}{$\frac{100}{100}$} & \multicolumn{1}{c}{$\frac{100}{150}$} & \multicolumn{1}{c}{$\frac{100}{200}$}
  & \multicolumn{1}{c}{$\frac{150}{100}$} & \multicolumn{1}{c}{$\frac{200}{150}$} & \multicolumn{1}{c}{$\frac{250}{200}$}
  & \multicolumn{1}{c}{$\frac{400}{200}$} & \multicolumn{1}{c}{$\frac{800}{400}$} & \multicolumn{1}{c}{$\frac{1200}{600}$}  \\
  \midrule
  \multicolumn{9}{l}{\emph{mean number of selected variables}}\\
  \hspace*{5mm}lasso        & 20.4 & 19.5 & 19.7 & 23.1 & 24.1 & 27.2 & 30.7 & 36.7 & 37.4 \\
  \hspace*{5mm}solar        & 10.5 &  9.3 &  9.1 & 10.7 &  9.8 &  8.7 & 11.4 & 16.1 & 18.5 \\
  \\ [-8pt]
  \hspace*{5mm}bolasso-S    &  5.5 &  6.4 &  6.5 &  5.5 &  6.4 &  6.5 &  5.7 &  6.6 &  7.6 \\
  \hspace*{5mm}bolasso-H    &  5   &  5   &  5   &  5   &  5   &  5   &  5   &  5   &  5   \\
  \\ [-8pt]
  \hspace*{5mm}bsolar-3S/3H &  5.4 &  5.2 &  5.1 &  5.4 &  5.2 &  5.1 &  5.3 &  5.8 &  6   \\
  \hspace*{5mm}bsolar-5S/5H &  5.2 &  5.1 &  5   &  5.2 &  5.1 &  5   &  5.1 &  5.2 &  5.4 \\
  \hspace*{5mm}bsolar-10S   &  5.2 &  5.1 &  5   &  5.2 &  5.1 &  5   &  5.1 &  5.2 &  5.3 \\
  \hspace*{5mm}bsolar-10H   &  5   &  5   &  5   &  5   &  5   &  5   &  5   &  5   &  5.1 \\
  \\ [-8pt] \multicolumn{9}{l}{\emph{mean number of selected informative variables}}\\
  \hspace*{5mm}lasso        &  5   &  5   &  5   &  5   &  5   &  5   &  5   &  5   &  5   \\
  \hspace*{5mm}solar        &  5   &  5   &  5   &  5   &  5   &  5   &  5   &  5   &  5   \\
  \hspace*{5mm}bolasso-S/H  &  5   &  5   &  5   &  5   &  5   &  5   &  5   &  5   &  5   \\
  \hspace*{5mm}bsolar-3S/3H/5S/5H/10S/10H & 5 & 5 & 5 & 5 & 5 & 5 & 5 & 5 & 5 \\
  \bottomrule
  \end{tabular}}
\end{table}

Table~\ref{table:sim_1} also reveals several advantages of solar over lasso in bootstrap selections.
\begin{itemize}
  \item In terms of variable selection, bolasso-S stands out with the poorest sparsity while the others perform almost identically.
  \item Solar and bsolar exhibits a considerable computational advantage. We show in Section~\ref{subsection:comp} that solar imposes less than $1/3$ of the lasso computation load, implying that bsolar-3 has the same computation load as lasso. Given bolasso requires 256 subsample lasso repetitions while bsolar-3 has the same computation load as one lasso realization, bsolar reduces subsample repetitions by 99\% relative to bolasso (assuming a time complexity measure like $O(n^2)$ and $p>n$ for lasso).
  \item Similar findings apply to the comparison between bsolar and lasso stability selection. Bsolar, bolasso-H, and stability selection ($f>0.9$) return very similar sparsity and accuracy (on average selecting all informative variables and very rarely including an redundant variable). However, lasso stability selection implements $100$ subsample repetitions respectively while bsolar-3 only requires $3$. Even though the size of the bootstrap subsample in stability selection is $n/2$ (substantially smaller than the bootstrap sample size of bsolar, which is $n$), the time complexity analysis of \citet{meinshausen2010stability} still implies that bsolar-3 produces a reduction of at least $67$-$82\%$ in computation load relative to lasso stability selection. Such amount of computation time reduction is crucial in large scale applications like DNA sequencing, natural language processing, imagine processing, and MRI neuroimaging [where each observation (image) often contains more than $10^6$ pixels as candidate variables, and the total data size can easily go beyond 1GB even with limited $n$.]. The amount of computation reduction can be even more substantial if the application requires a certain lasso/solar variation like "group", "fused", or "elastic net" (discussed in Section~\ref{subsection:variant}).   
\end{itemize}

\subsection{Explanation of the efficiency discrepancy between bolasso-bsolar}

The efficiency of bsolar is due to its unique multi-layer variable ranking scheme. While bsolar and bolasso both generate bootstrap subsamples, bsolar uses a different bootstrap variable selection procedure. Specifically,

\begin{itemize}

  \item     solar executes Algorithm~\ref{algo:APE-lar} (or \ref{algo:APE-cd}) on each bootstrap subsample and ranks variables using the average $L_0$ path, which we call the \emph{internal ranking}. The internal ranking identifies the strongest signals on each bootstrap subsample.

  \item     bsolar collates the internal ranking results to produce an overall ranking, which we call the \emph{external ranking}. The external ranking identifies the strongest signals on the majority of bootstrap subsamples.
\end{itemize}

The multi-layer method has several advantages over the usual one-layer ranking methods, such as bootstrap lasso and lasso stability selection \citep{fan2008sure, hall2009usingb, hall2009using, li2012robust, li2012feature}.
\begin{itemize}
  \item First, one-layer methods rank variables on the whole sample. By contrast, the internal ranking uses the average $L_0$ path, which, as discussed in Section~2.1, improves robustness to multicollinearity, noise, and sample size.
  \item Second, as shown in Section~\ref{section:example}, internal ranking avoids issues caused by complicated dependence structures that other (unconditional) ranking methods cannot.
  \item Most important, multi-layer ranking reduces the number of bootstrap repetitions without compromising accuracy. One-layer methods select variables immediately after ranking. Our method performs a second external ranking that, by detecting persistent signals, is more tolerant of subsample variation: if $\mathbf{x}_i$ is wrongly selected or omitted in the internal ranking, there is still a large probability that the mistake will be corrected in the external ranking. While stability selection and bolasso require, respectively, 100 and 256 repetitions to average out lasso selection issues, bsolar requires only 3-10 bootstrap repetitions to confirm the solar variable ranking.
\end{itemize}

\begin{table}[!htb]
  \caption{Subsample variable selection frequencies for bolasso and bsolar-10.}
  \label{table:subsample_select_freq}
  \begin{minipage}[t]{.55\linewidth}
    \small
    \subfloat[bolasso]{%
    \label{table:subsample_select_freq_1}
			\renewcommand{\arraystretch}{0.7}
			\begin{tabular}{cl}
				\toprule
				frequency & variables \\
				\midrule
				$\geqslant 1.00$ & $\mathbf{x}_4, \mathbf{x}_3, \mathbf{x}_2, \mathbf{x}_1, \mathbf{x}_0$ \\
				$\geqslant 0.88$ & $\mathbf{x}_4, \mathbf{x}_3, \mathbf{x}_2, \mathbf{x}_1, \mathbf{x}_0, \mathbf{x}_{28}$ \\
				$\geqslant 0.84$ & $\mathbf{x}_4, \mathbf{x}_3, \mathbf{x}_2, \mathbf{x}_1, \mathbf{x}_0, \mathbf{x}_{28}, \mathbf{x}_{71}$\\
				$\geqslant 0.76$ & $\mathbf{x}_4, \mathbf{x}_3, \mathbf{x}_2, \mathbf{x}_1, \mathbf{x}_0, \mathbf{x}_{28}, \mathbf{x}_{71}, \mathbf{x}_{91}$\\
				$\geqslant 0.70$ & $\mathbf{x}_4, \mathbf{x}_3, \mathbf{x}_2, \mathbf{x}_1, \mathbf{x}_0, \mathbf{x}_{28}, \mathbf{x}_{71}, \mathbf{x}_{91}, \mathbf{x}_{94}$\\
				$\geqslant 0.69$ & $\mathbf{x}_4, \mathbf{x}_3, \mathbf{x}_2, \mathbf{x}_1, \mathbf{x}_0, \mathbf{x}_{28}, \mathbf{x}_{71}, \mathbf{x}_{91}, \mathbf{x}_{94}, \mathbf{x}_{70}, \mathbf{x}_{40}$ \\
        $\vdots$ & $\vdots$ \\
				\bottomrule
		\end{tabular}}
  \end{minipage}
  \begin{minipage}[t]{.5\linewidth}
    \small
    \subfloat[bsolar-10]{%
    \label{table:subsample_select_freq_2}
			\renewcommand{\arraystretch}{0.7}
			\begin{tabular}{cl}
				\toprule
				frequency & variables \\
				\midrule
				$\geqslant 1.00$ & $\mathbf{x}_4, \mathbf{x}_3, \mathbf{x}_2, \mathbf{x}_1, \mathbf{x}_0$ \\
				$\geqslant 0.10$ & $\mathbf{x}_4, \mathbf{x}_3, \mathbf{x}_2, \mathbf{x}_1, \mathbf{x}_0, \mathbf{x}_{91}, \mathbf{x}_{71}$ \\
                $= 0$ & all other variables \\
				\bottomrule
		\end{tabular}
    }
  \end{minipage}
\end{table}

Furthermore, as shown in Table~\ref{table:subsample_select_freq}, bsolar produces a shorter and more accurate list of subsample variable selection frequencies. Table~\ref{table:subsample_select_freq_1} breaks down the subsample selection frequency list from 256 subsamples for one bolasso realization with $p/n=100/200$. Due to the length of the list, we report only subsample selection frequencies $\ge0.69$. With only one layer of ranking, bolasso is unable to separate informative from redundant variables even with 256 subsample repetitions. The frequency discrepancy for bolasso between the highest-ranking redundant ($\mathbf{x}_{28}$) and the lowest-ranking informative variable ($\mathbf{x}_0$) is only $0.12$. By contrast, Table~\ref{table:subsample_select_freq_2} shows bsolar-10 returns a much shorter list with a frequency discrepancy between the highest-ranking redundant ($\mathbf{x}_{91}$) and the lowest-ranking informative variable ($\mathbf{x}_0$) of $0.9$. To increase the discrepancy between the lowest ranked informative and highest ranked redundant variables for bolasso, \citet{bach2008bolasso} suggests raising the number of subsample repetitions. However, increasing repetitions will raise the bolasso computation load in high-dimensional spaces, increasing the advantage of bsolar.

\subsection{Computation time comparison \label{subsection:comp}}

The time complexity of an algorithm indicates only how computation time changes as data size (parameterized by $n$, $p$, and $K$) increases. Time complexity analysis omits many other computation parameters (such as hardware specification), suggesting it may substantially underestimate computation time difference of two algorithms in real-world problems. Hence, in this section we compare computation efficiency in terms of CPU times.

Since the computation load for lars or coordinate descent on a given sample is fixed, we may use the number of lars or coordinate descents to approximate the computation load for solar and lasso. For comparison, we compute solar with $K$ subsamples and lasso with $K$-fold cross-validation. As shown in Algorithm~\ref{algo:APE-lar} and \ref{algo:APE-cd}, solar computes one lars or coordinate descent on each subsample $(X^k, Y^k)$, which implies $K=3$ lars or coordinate descents to compute $\widehat{q}$ and one more pass to compute $c^*$ for variable selection. Lasso requires computing $K=10$ lars or coordinate descents to optimize the tuning parameter and, given the optimal tuning parameter, one more pass on the full sample to select variables. Thus, the solar computation load is less than $1/3$ that of lasso.

Given the computation loads for lasso and solar, we can work out the differences between bolasso and bsolar using the number of subsample repetitions (SR). Bolasso repeats lasso $256$ times while bolar-3 repeats solar only 3 times to obtain similar sparsity, bsolar-3 has approximately the same computation load as lasso.

\begin{table}[ht]
%
\centering
\caption{Simulation results for parallel computation time (mean runtime in seconds).\label{table:sim_load}}
\smallskip
\resizebox{0.98\textwidth}{!}{%
\renewcommand{\arraystretch}{0.7}
\begin{tabular}{l ... ... ...}
  \toprule
        & \multicolumn{3}{c}{$p/n\rightarrow0$}
        & \multicolumn{3}{c}{$p/n\rightarrow1$}
        & \multicolumn{3}{c}{$\log(p)/n\rightarrow0$} \\
        \cmidrule(lr){2-4} \cmidrule(lr){5-7} \cmidrule(lr){8-10}
        & \multicolumn{1}{c}{$\frac{100}{100}$} & \multicolumn{1}{c}{$\frac{100}{150}$} & \multicolumn{1}{c}{$\frac{100}{200}$} & \multicolumn{1}{c}{$\frac{150}{100}$} & \multicolumn{1}{c}{$\frac{200}{150}$} & \multicolumn{1}{c}{$\frac{250}{200}$} & \multicolumn{1}{c}{$\frac{400}{200}$} & \multicolumn{1}{c}{$\frac{800}{400}$} & \multicolumn{1}{c}{$\frac{1200}{600}$}  \\
  \midrule
  bsolar-3               & 0.05  & 0.07  & 0.08  & 0.06  & 0.08  & 0.12  & 0.32  & 0.51   & 1.04   \\
  bolasso (lars, 256 SR) & 9.52  & 12.49 & 10.61 & 10.01 & 13.92 & 19.72 & 23.10 & 184.59 & 502.56 \\
  bolasso (cd,   256 SR) & 13.49 & 60.51 & 60.35 & 13.92 & 16.85 & 20.17 & 27.73 & 100.58 & 308.12 \\
  \bottomrule
  \end{tabular}}
\end{table}

Table~\ref{table:sim_load} shows the average runtimes for the simulations. Generally speaking, bsolar-3 has a much shorter runtime than bolasso. When $n \times p$ is small (the first 5 columns), bsolar-3 runtime is roughly $0.5$-$1\%$ of bolasso runtime (assuming bolasso is solved by lars), which is consistent with the time complexity estimation. However, as $n$ and $p$ increase rapidly, parallel computation of bolasso is substantially more difficult to coordinate. This is primarily because our CPU must simultaneously generate 10-16 (the number of CPU cores) data matrices $X\in \mathbb{R}^{1200 \times 600}$, each of which must then be bootstrapped into 256 sub-matrices $X_{sub}\in \mathbb{R}^{1080 \times 600}$ for each CPU core to read again in parallel. This volume of data is more than our CPU can read from RAM in a single pass. As a result, the runtime differences are even more pronounced when $p$ and $n$ increase. The 256 subsample repetitions (totally 2816 lars or coordinate descent reptitions) render the bolasso selection algorithms computationally infeasible even with moderate $p$ and $n$. By contrast, bsolar-3 requires only 9 realizations of lars or coordinate descent. Due to a lighter computational load and CPU usage, bsolar-3 parallel computing is much easier to coordinate. As a result, the computation time difference will be much more substantial if the number of CPU cores is below $8$.

\subsubsection{Comparison with previous lasso computation research}

We thoroughly demonstrate the solar computation advantages in bootstrap selection in two steps. Firstly we show that, for given computation resources, our bolasso package almost attains the theoretical maximum speedup for lasso parallelization. Secondly, we show that bsolar is substantially faster than our bolasso package. Thus, given the computation resources, the speed of bsolar substantially exceeds the theoretical maximum speed of bolasso.

Given the same convergence criteria (tolerance for optimization and number of iterations), number of folds for CV ($K=10$), and number of $\lambda$s in the grid search (100), the time complexity of lasso is mostly determined by $n$, $p$, and pairwise correlations among the covariates ($corr$). For the purposes of comparison, we consider a Gaussian regression with $p/n=1000/100$ and $corr=0.5$.

\begin{itemize}
  \item With a 2.8GHz frequency, 2-core Intel Xeon CPU,
      \citet[Table 1]{friedman2010regularization} method reports an average runtime of 0.07 seconds for one pathwise coordinate descent realization (with covariance pre-computed for updating). The \citet{friedman2010regularization} package is coded in R with all numerical computations executed in Fortran/C++.
  \item Using an Intel Xeon W-3245 CPU with 3.2GHz frequency and 16 cores, the average runtime for the coordinate descent bolasso package is 41.92 seconds (with covariance pre-computed automatically), accounting for 256 realizations of 10-fold, cross-validated lasso (namely 2,816 pathwise coordinate descent realizations). Thus, the average runtime is 0.014 seconds per pathwise coordinate descent.
\end{itemize}

\noindent
Thus, with a similar CPU frequency and 14 additional cores, our lasso implementation produces an average speedup of $0.07/0.014=5.0$ times over \citet{friedman2010regularization} for each pathwise coordinate descent repetition.

Our code and the \citet{friedman2010regularization} code use the same design: 10 (parallelizable) pathwise coordinate descent repetitions to optimize $\lambda$ followed by a final (non-parallelizable) step to compute $\beta$. Roughly 11\% of the total computations (I/O, code interpretation to C++/Fortran, data generation, etc., matrix manipulation, and the step to compute $\beta$) are not parallelizable. Given $n$ and $p$, the maximum speedup according to Amdahl's law is:
\begin{equation}
  \frac{1}{\rho + (1-\rho)/s} = \frac{1}{0.11 + (1-0.11)/(16/2)} \approx 4.5,
\end{equation}
where $\rho$ is the proportion of computation that is not parallelizable and $s$ is the computation speedup for the parallelizable proportion (i.e., the core number multiple). Given that our CPU base frequency is also higher than \citet{friedman2010regularization} ($3.2$GHz over $2.8$GHz), we adjust the maximum speedup by the frequency multiple ($3.2/2.8$), resulting in a final maximum speedup of $4.5 \times 3.2/2.8 \approx 5.2$, or 4\% faster than our speedup of 5.0. Hence, given the core number and CPU frequency, our coordinate descent bolasso package achieves almost 96\% of the maximum possible speedup.

\begin{figure}[ht]
  \centering
  \includegraphics[width=0.8\linewidth]{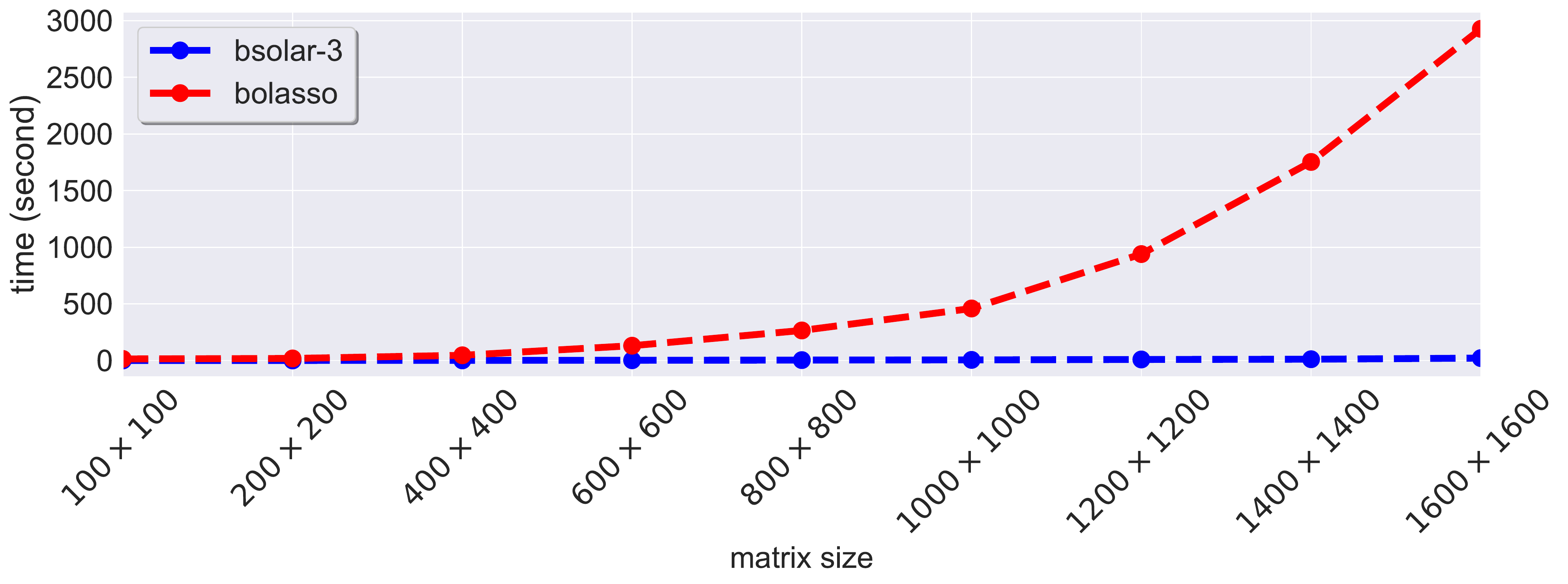}
  \caption{Average runtime (per pathwise coordinate descent) comparison for different $X$ matrix sizes.}
  \label{fig:runtime}
\end{figure}

As illustrated in Figure~\ref{fig:runtime}, bsolar easily outmatches the theoretical maximum speedup for paral- lelizing lasso-type estimators as n and p increases. Figure~\ref{fig:runtime} plots average runtime against the size ($n \times p$) of the $X$ matrix. As matrix size increases, the optimized bolasso package runtime rises exponentially while bsolar runtime increases linearly. Thus, bsolar easily outmatches the theoretical maximum speedup for bolasso as $n$ and $p$ increase, confirming the bsolar-3 advantage for high-dimensional data.

\subsubsection{Implication of solar computation advantages}

The efficiency of bsolar computation solves the issue of choosing a bootstrap variable selection threshold. \citet{bach2008bolasso} and \citet{meinshausen2010stability} claim that choosing a predefined value for the selection threshold ($f=1$, $0.9$, or $f\in\left[0.6,0.9\right]$) will return similarly sparse results. However, \citet{bach2008bolasso} and \citet{huang2014stat} show that predefined values may cause problems for variable selection. With $p/n=50/500$, $\mathrm{sd}(e)=3$, and true signal strength around $2$, \citet{huang2014stat} finds a 50\% false discovery rate with bootstrap selection methods, suggesting that the threshold still requires data-driven tuning. Moreover, the large number of bootstrap repetitions makes traditional parameter tuning methods (such as cross validation) computationally unaffordable for stability selection or bolasso. By contrast, the bsolar algorithm is efficient with even large $p,n$. The efficiency of bsolar means it is possible to tune $f$ by cross validation with a runtime of less than $6$ seconds for $p/n=1200/600$.

\section{Real-world data: Sydney house price prediction\label{section:application}}

To demonstrate that the improvements from solar are empirically feasible, we apply solar to real-world data. The real-world data reflect both the $p/n\rightarrow0$ scenarios as well as the challenging IRC settings, complicated dependence structures, and grouping effects typical of data in the social sciences.

The database is assembled from multiple sources. The primary source comprises real estate market transaction data for 11,974 Sydney, Australia, houses sold in 2010, including price and house attribute information (GIS coordinates, property address, bedrooms, bathrooms, car spaces, etc.). Each property is GIS-matched with: 2011 census data by Statistical Area Level 1 (the smallest census area in Australia, comprising at most 200 people or 60 households); 2010 and 2011 crime data by suburb; 2010 geo-spatial information on topology, climate, pollution, and aircraft noise; Google Maps data; 2009 primary and secondary school data; and 2010 Sydney traffic and public transport data (bus routes, train stations, and ferry wharfs). We predict house price with a linear model.

Using an ensemble of Bayes network learning algorithms for data cleaning, we reject variables with both very low conditional and unconditional correlations to house price. The remaining variables are listed in the first column of Table~\ref{table:house_variable}.\footnote{Due to the 200GB size of the database, we include only the data for these variables in the supplementary file.} The 57 variables fall into 5 broad categories: house attributes, distance to key locations (public transport, shopping, etc.), neighbourhood socioeconomic data, localized administrative and crime data, and local school quality. Pairwise correlations among all 57 covariates indicate, not surprisingly, severe multicollinearity and grouping effects, implying a harsh IRC setting.\footnote{Correlations and IRC are also reported in supplementary files.} Thus, heuristically increasing the value of the tuning parameter in lasso-type estimators (e.g., using the one-sd or the `elbow' rule) is unlikely to be useful since it may trigger further grouping effects and the random dropping of variables.

Table~\ref{table:house_variable} shows the selection comparison across the elastic net, lasso, and solar. With all variables in linear form, both lasso and elastic net lose sparsity, likely due to the complicated dependence structures and severe multicollinearity in the data, accordant with \citet{jia2010model}. By contrast, solar returns a much sparser model, with only $9$ variables selected from $57$. Very similar results are found with the variables in log form, hinting that solar possesses superior selection sparsity and robustness to a change in functional form. More importantly, solar variable selection outperforms the lasso-type estimators in terms of the balance between sparsity and prediction power. While pruning 25-48 variables from the elastic net and lasso selections, the post-selection regression $\mathrm{R}^2$ for solar falls by just 3-5\%.


\section{Conclusion}

In this paper we propose the solar (subsample-ordered least-angle regression) algorithm for high-dimensional data. Solar constructs solution paths using the $L_0$ norm and averages the resulting solution paths across subsamples, reducing sensitivity to high dimensionality while improving variable selection stability, efficiency, and accuracy. We prove that $L_0$ path averaging separates informative from redundant variables, that solar variable selection is consistent, and that the probability that solar omits weak signals is controllable for finite sample size.

Through simulations, examples, and real-world data, we demonstrate that, without any increase in computation load, solar yields substantial improvements over lasso in terms of the sparsity, stability, and accuracy of variable selection. We also find that solar largely avoids selection of redundant variables and rejection of informative variables in the presence of complicated dependence structures and harsh settings of the irrepresentable condition while conserving residual degrees of freedom for hypothesis testing. Relative to bootstrap lasso, bootstrapping solar improves selection sparsity and ranking accuracy and, for given computation resources, is substantially faster.

Detection of weak signals is a potential weakness in solar, although relative to lasso the difference is very slight. Nonetheless, we are working on an extension to solar, the double-bootstrap solar (DBsolar), which, if early results are any indication, promises to enable solar accurately to detect variables with weak signals.


\bibliographystyle{elsarticle-harv}

\bibliography{CVrefs}

\begin{table}[H]
  \centering
  \small
  \caption{Variable selection results for linear and log house price models. \label{table:house_variable}}
  \resizebox*{!}{0.95\textheight}{%
  \renewcommand{\arraystretch}{0.6}
  \begin{tabular}{@{}ll@{\extracolsep{6pt}}c@{\extracolsep{-2pt}}c@{\extracolsep{6pt}}c@{\extracolsep{-2pt}}c@{\extracolsep{6pt}}c@{\extracolsep{-2pt}}c@{}}
    \toprule
            &             & \multicolumn{2}{c}{elastic net}
                          & \multicolumn{2}{c}{lasso}
                          & \multicolumn{2}{c}{solar} \\
                          \cline{3-4} \cline{5-6} \cline{7-8} \\[-7pt]
    Variable & Description& \multicolumn{1}{c}{linear}
                          & \multicolumn{1}{c}{log}
                          & \multicolumn{1}{c}{linear}
                          & \multicolumn{1}{c}{log}
                          & \multicolumn{1}{c}{linear}
                          & \multicolumn{1}{c}{log} \\
    \midrule
    Bedrooms           & property, number of bedrooms             & \checkmark  & \checkmark  & \checkmark  & \checkmark  & \checkmark & \checkmark  \\
    Baths              & property, number of bathrooms            & \checkmark  & \checkmark  & \checkmark  & \checkmark  & \checkmark & \checkmark  \\
    Parking            & property, number of parking spaces       & \checkmark  & \checkmark  & \checkmark  & \checkmark  & \checkmark & \checkmark  \\
    AreaSize           & property, land size                      & \checkmark  & \checkmark  & \checkmark  & \checkmark  &   &    \\ \midrule
    Airport            & distance, nearest airport                & \checkmark  & \checkmark  & \checkmark  & \checkmark  &   &    \\
    Beach              & distance, nearest beach                  & \checkmark  & \checkmark  & \checkmark  & \checkmark  & \checkmark & \checkmark  \\
    Boundary           & distance, nearest suburb boundary        & \checkmark  & \checkmark  & \checkmark  & \checkmark  &   &    \\
    Cemetery           & distance, nearest cemetery               & \checkmark  &             & \checkmark  &    &   &    \\
    Child care         & distance, nearest child-care centre      & \checkmark  & \checkmark  & \checkmark  & \checkmark  &   & \checkmark  \\
    Club               & distance, nearest club                   & \checkmark  & \checkmark  & \checkmark  & \checkmark  &   &    \\
    Community facility & distance, nearest community facility     & \checkmark  & \checkmark  &    &    &   &    \\
    Gaol               & distance, nearest gaol                   & \checkmark  & \checkmark  &    &    & \checkmark & \checkmark  \\
    Golf course        & distance, nearest golf course            & \checkmark  & \checkmark  & \checkmark  & \checkmark  &   &    \\
    High               & distance, nearest high school            & \checkmark  & \checkmark  & \checkmark  & \checkmark  &   &    \\
    Hospital           & distance, nearest general hospital       & \checkmark  & \checkmark  &    & \checkmark  &   &    \\
    Library            & distance, nearest library                & \checkmark  &             & \checkmark  &    &   &    \\
    Medical            & distance, nearest medical centre         & \checkmark  & \checkmark  &    & \checkmark  &   &    \\
    Museum             & distance, nearest museum                 & \checkmark  & \checkmark  & \checkmark  & \checkmark  &   &    \\
    Park               & distance, nearest park                   & \checkmark  & \checkmark  & \checkmark  &    &   &    \\
    PO                 & distance, nearest post office            & \checkmark  & \checkmark  &    & \checkmark  &   &    \\
    Police             & distance, nearest police station         & \checkmark  & \checkmark  & \checkmark  & \checkmark  &   &    \\
    Pre-school         & distance, nearest preschool              & \checkmark  & \checkmark  & \checkmark  & \checkmark  &   &    \\
    Primary            & distance, nearest primary school         & \checkmark  & \checkmark  & \checkmark  & \checkmark  &   &    \\
    Primary High       & distance, nearest primary-high school    & \checkmark  & \checkmark  & \checkmark  & \checkmark  &   &    \\
    Rubbish            & distance, nearest rubbish incinerator    & \checkmark  & \checkmark  & \checkmark  &    &   &    \\
    Sewage             & distance, nearest sewage treatment       & \checkmark  &    &    &    &   &    \\
    SportsCenter       & distance, nearest sports centre          & \checkmark  & \checkmark  & \checkmark  & \checkmark  &   &    \\
    SportsCourtField   & distance, nearest sports court/field     & \checkmark  &             & \checkmark  & \checkmark  &   &    \\
    Station            & distance, nearest train station          & \checkmark  &             & \checkmark  &    &   &    \\
    Swimming           & distance, nearest swimming pool          & \checkmark  & \checkmark  & \checkmark  & \checkmark  &   &    \\
    Tertiary           & distance, nearest tertiary school        & \checkmark  & \checkmark  & \checkmark  & \checkmark  &   &    \\
    \midrule
    Mortgage           & SA1, mean mortgage repayment (log)       & \checkmark  & \checkmark  & \checkmark  & \checkmark  & \checkmark & \checkmark  \\
    Rent               & SA1, mean rent (log)                     & \checkmark  & \checkmark  & \checkmark  & \checkmark  & \checkmark & \checkmark  \\
    Income             & SA1, mean family income (log)            & \checkmark  & \checkmark  & \checkmark  & \checkmark  & \checkmark & \checkmark  \\
    Income (personal)  & SA1, mean personal income (log)          & \checkmark  &    &    &    &   &    \\
    Household size     & SA1, mean household size                 & \checkmark  & \checkmark  & \checkmark  & \checkmark  &   &    \\
    Household density  & SA1, mean persons to bedroom ratio       & \checkmark  & \checkmark  & \checkmark  & \checkmark  &   &    \\
    Age                & SA1, mean age                            & \checkmark  & \checkmark  & \checkmark  & \checkmark  &   & \checkmark  \\
    English spoken     & SA1, percent English at home             & \checkmark  &             & \checkmark  &    &   &    \\
    Australian born    & SA1, percent Australian-born             & \checkmark  &             & \checkmark  &    &   &    \\
    \midrule
    Suburb area        & suburb area                             & \checkmark  &    & \checkmark  & \checkmark  &   &    \\
    Population         & suburb population                       & \checkmark  & \checkmark  &    & \checkmark  &   &    \\
    TVO2010            & suburb total violent offences, 2010     & \checkmark  &             &    &    &   &    \\
    TPO2010            & suburb total property offences, 2010    & \checkmark  & \checkmark  &    & \checkmark  &   &    \\
    TVO2009            & suburb total violent offences, 2009     & \checkmark  & \checkmark  & \checkmark  &    &   &    \\
    TPO2009            & suburb total property offences, 2009    & \checkmark  & \checkmark  &    &    &   &    \\
    \midrule
    ICSEA              & local school, socio-educational advantage & \checkmark  & \checkmark  & \checkmark  & \checkmark  & \checkmark & \checkmark  \\
    ReadingY3          & local school, year 3 mean reading score  & \checkmark  & \checkmark  & \checkmark  & \checkmark  &   &    \\
    WritingY3          & local school, year 3 mean writing score  & \checkmark  & \checkmark  & \checkmark  & \checkmark  &   &    \\
    SpellingY3         & local school, year 3 mean spelling score & \checkmark  & \checkmark  & \checkmark  &    &   &    \\
    GrammarY3          & local school, year 3 mean grammar score  & \checkmark  &             & \checkmark  &    &   &    \\
    NumeracyY3         & local school, year 3 mean numeracy score & \checkmark  & \checkmark  & \checkmark  & \checkmark  &   &    \\
    ReadingY5          & local school, year 5 mean reading score  & \checkmark  &    &    &    &   &    \\
    WritingY5          & local school, year 5 mean writing score  & \checkmark  & \checkmark  & \checkmark  &    &   &    \\
    SpellingY5         & local school, year 5 mean spelling score & \checkmark  & \checkmark  & \checkmark  &    &   &    \\
    GrammarY5          & local school, year 5 mean grammar score  & \checkmark  & \checkmark  & \checkmark  &    &   &    \\
    NumeracyY5         & local school, year 5 mean numeracy score & \checkmark  &             &    &    &   &    \\
    \midrule
                      & Number of variables selected  & 57 & 45 & 44 & 36 & 9 & 11 \\
                      & post-selection OLS $R^2$      & 0.55 & 0.76 & 0.55 & 0.76 & 0.50 & 0.73\\
                      & Sample size & \multicolumn{6}{c}{11,974} \\

    \bottomrule

  \end{tabular}}

\end{table}

\clearpage
\appendixpagenumbering

\begin{appendices}
\section{$L_0$ path ranking accuracy and variable selection consistency}

For generality, we derive the theoretical properties of solar under the general forward selection framework of \citet[Figure~1]{zhang09}. Our proof method is summarized as follows. Under various settings and assumptions, \citet{tropp2004greed}, \citet{yuan2007non}, \citet{wainwright2009sharp}, \citet{zhang09}, and \citet{ing2011stepwise} have shown: (i) forward selection is consistent in different modes, and (ii) informative variables are ranked higher at earlier stages of the solution path than redundant variables. Since the $L_0$ path on each solar subsample is essentially the re-parameterized forward selection path, (i) and (ii) can be applied directly to the $L_0$ path on each solar subsample. As a result, we can build a probabilistic lower bound to show that, on average, (i) and (ii) also hold for the average $L_0$ path and for the variable selection result on the average $L_0$ path (including solar variable selection). We follow the approach of \citet{tropp2004greed}, \citet{wainwright2009sharp}, and \citet{zhang09}, because their assumptions and methods of analysis are similar to the theoretical analysis of lasso-type estimators. We follow the \citet{zhang09} notation.

\begin{definition}
  Consider the regression model $Y = X\beta + \mathbf{e}$, \label{def:notation}
\end{definition}
\begin{enumerate}
  \item     $Y = \left[ y_1, \ldots, y_n \right]^T \in \mathbb{R}^{n \times 1}$ is the response variable.

  \item     The data matrix is $X = \left[ \mathbf{x}_1, \ldots,\mathbf{x}_p \right] \in \mathbb{R}^{n \times p}$ with columns $\mathbf{x}_j \in \mathbb{R}^{n \times 1}$, $\forall j=1,\ldots,p$, and rows $ X_{i,\bigcdot} \in \mathbb{R}^{1 \times p}$, $\forall i = 1,\ldots,n$. $\mathbf{e}\in \mathbb{R}^{n\times 1}$ is a stochastic noise term.
  \item     The regression coefficients of the data generating process (DGP) are $\overline{\beta} = \left[\,\overline{\beta}_1, \ldots, \overline{\beta}_{p_1}, \mathbf{0} \right]^T \in \mathbb{R}^{p\times 1}$, where the first $p_1$ entries are not $0$.
  \item     The support, $\forall \beta \in \mathbb{R}^{p \times 1}$, is $\mathrm{supp}(\beta) = \{ j : \beta_j \neq 0 \} $.
  \item     Given $X \in \mathbb{R}^{n \times p}$ and $F \subset \left\{1, \ldots, p \right\}$,
  \begin{displaymath}
    \widehat{\beta}_{X} \left( F,Y \right) =
    \mathrm{argmin}_{ \beta \in \mathbb{R}^{p \times 1}} \;\;
    \frac{1}{n}\left\Vert X\beta - Y\right\Vert_{2}^{2}
    \quad\mbox{subject to}\quad \mathrm{supp}(\beta)\subset F.
  \end{displaymath}
  That is, $\widehat{\beta}_{X}\left(F,Y\right)$ is the least squares solution with coefficients restricted to $F$.
  \item $\left\vert F \right\vert$ is the cardinality of $F$ while $\overline{F}-F$ is the difference of sets $\overline{F}$ and $F$.
  \item $X_F$ is an $n \times \left\vert F \right\vert$ matrix with columns $\mathbf{x}_j \in \left[ \mathbf{x}_0, \ldots \mathbf{x}_p \right]$ with $j\in F$ arranged in ascending order.
  \item To introduce the irrepresentable condition and sparse eigenvalue condition, define
  \begin{displaymath}
    \mu_{X}\left(F\right)=\max_{j\in F}\left\Vert \left(X_{F}^{T}X_{F}\right)^{-1}X_{F}^{T}\mathbf{x}_{j}\right\Vert _{1}
  \end{displaymath}
  and
  \begin{displaymath}
    \rho_{X}\left(F\right)=\inf\left\{ \frac{1}{n}\left\Vert X\beta\right\Vert _{2}^{2}/\left\Vert \beta\right\Vert _{2}^{2}:\mathrm{supp}\left(\beta\right)\subset F\right\}
  \end{displaymath}
  \item (The $\epsilon$ stopping rule.) The \citet[Figure~1]{zhang09} framework is also known as orthogonal matching pursuit (OMP). Prior to stage $l$, forward selection finds the unselected variable
  \begin{displaymath}
    \mathbf{x}^{(l)} = \mathrm{argmax}_{\mathbf{x}_j} \;\; \left\vert \mathbf{x}_j^T u^{(l-1)} \right\vert, \\
    \quad\mbox{for all unselected }\mathbf{x}_{j},
  \end{displaymath}
  where $u^{\left(l-1\right)}$ is the forward regression residual of stage $l-1$. If $\left\vert (\mathbf{x}^{(l)})^T u^{(l-1)} \right\vert > \epsilon$, the forward selection loop will select $\mathbf{x}^{(l)}$, compute $u^{(l)}$ and move to stage $l+1$; otherwise, the forward selection loop will stop and report the regression coefficients and selected variables on and before stage $l-1$.
\end{enumerate}

We also adopt the \citet{zhang09} assumptions:
\begin{description}
  \item [{[}A1{]}] each $\mathbf{x}_i$ is normalized such that $\left\Vert \mathbf{x}_{j}\right\Vert _{2}^{2}/n=1,\;\forall j=1,\ldots,p$;
  \item [{[}A2{]}] the $\overline{\beta}$ is sparse: $\exists \overline{\beta} \in \mathbb{R}^{p\times 1}$ with $\overline{F} = \mathrm{supp} \left( \overline{\beta} \right)$ such that $ \mathbf{E} \left( Y \right) = X\overline{\beta} = \left[ X_{1,\bigcdot} \; \overline{\beta}, \;  \ldots, \; X_{n,\bigcdot} \; \overline{\beta} \right]^T$;
  \item [{[}A3{]}] the irrepresentable condition \citep{tropp2004greed}: $\mu_{X}\left(\overline{F}\right)<1$; the sparse eigenvalue condition \citep{wainwright2009sharp}: $\rho_{X}\left(\overline{F}\right)>0$.
  \item [{[}A4{]}] there exists a $ \sigma > 0$ such that $Y = \left[ Y_1, \ldots, Y_n \right]$ are independent (but not necessarily identically distributed) sub-Gaussians with $\mathbb{E} \left(Y_i\right)=  X_{i,\bigcdot} \; \overline{\beta}$ and $\mathbb{E}_{Y_i} \left( e^{t\left(Y_i - \mathbb{E} \left( Y_i \right) \right)}\right) \leqslant e^{\sigma^2t^2/2}$, $\forall t\in \mathbb{R}$ and $\forall i\in \left\{1,\ldots,n\right\}$.
\end{description}

\textbf{A4} implies that $Y$ can be either bounded or unbounded. \textbf{A2} and \textbf{A4} imply that the regression noise in the DGP, $\left[e_1, \ldots, e_n\right]^T$, are independent sub-Gaussians.

The proof of variable selection accuracy and variable ranking accuracy is specified in the following steps.


\subsection*{Step 1 : re-parameterize the $\epsilon$ stopping rule via $\widehat{q}^k$ }

\citet[Theorem 1]{zhang09} shows that the probability of omitting informative variables is bounded on a finite sample.
\begin{theorem}
  \label{theoremA1}
  \citep{zhang09} Consider the forward selection algorithm with Assumption 1 satisfied. Given any $\eta \in \left( 0, 1 \right)$, with probability larger than $1 - \eta$, if the $\epsilon$  stopping criterion stops forward selection at stage $l$, satisfying
  \begin{equation}
    \epsilon>\frac{1}{1-\mu_X \left( \overline{F} \right)} \sigma\sqrt{ 2 \ln \left( 4p / \eta \right)}
  \end{equation}
  and
  \begin{equation}
    \min_{j\in\overline{F}}\left|\overline{\beta}_{j}\right|\geqslant\frac{3\epsilon}{\rho_{X}\left(\overline{F}\right)\cdot\sqrt{n}}, \label{thm0}
  \end{equation}
  then when the procedure stops at stage $l$,
  \begin{equation}
    \overline{F}=F^{\left(l-1\right)}, \nonumber
  \end{equation}
  where $F^{\left(l-1\right)}$ is the set of variable selected at stage $l-1$. $\blacksquare$
\end{theorem}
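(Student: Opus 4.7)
The plan is to reduce the statement to two complementary guarantees on the greedy path: (i) under the irrepresentable condition, forward selection never picks a redundant variable as long as the noise is controlled, and (ii) under the sparse eigenvalue condition and the minimum signal condition \eqref{thm0}, the $\epsilon$ stopping rule cannot terminate before all of $\overline{F}$ has been selected. The overall event on which the argument runs is a high-probability event that bounds the stochastic noise term $\mathbf{e}=Y-X\overline{\beta}$ uniformly over the columns of $X$. Combining (i) and (ii) forces $F^{(l-1)}=\overline{F}$ at the stopping stage.

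First I would establish the noise-control event. By Assumption~\textbf{A4}, each inner product $\mathbf{x}_j^{T}\mathbf{e}$ is a sub-Gaussian with parameter at most $\sigma^{2}\left\Vert \mathbf{x}_j\right\Vert_2^{2}=\sigma^{2}n$ (using \textbf{A1}). A standard Hoeffding-type tail bound plus union bound over $j=1,\dots,p$ gives
\begin{equation}
\mathbb{P}\Bigl(\max_{1\leqslant j\leqslant p}\bigl|\mathbf{x}_j^{T}\mathbf{e}\bigr|\leqslant\sigma\sqrt{2n\ln(4p/\eta)}\Bigr)\geqslant 1-\eta,\nonumber
\end{equation}
and I will work on this event throughout. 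This explains the precise constants appearing in the hypothesis on $\epsilon$; the factor $1/(1-\mu_X(\overline{F}))$ will absorb the amplification of noise that occurs when we project onto the selected subspace.

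Next I would run an induction on the forward-selection stage $l$ to prove a \emph{no-false-positive} lemma: if $F^{(l-1)}\subset\overline{F}$, then the maximizer $\mathbf{x}^{(l)}$ of $|\mathbf{x}_j^{T}u^{(l-1)}|$ also lies in $\overline{F}$. The key calculation writes the residual as $u^{(l-1)}=(I-P_{F^{(l-1)}})(X_{\overline{F}}\overline{\beta}_{\overline{F}}+\mathbf{e})$, so for any redundant $j\notin\overline{F}$, $\mathbf{x}_j^{T}u^{(l-1)}$ can be expanded using $(X_{\overline{F}}^{T}X_{\overline{F}})^{-1}X_{\overline{F}}^{T}\mathbf{x}_j$, whose $\ell_1$ norm is bounded by $\mu_X(\overline{F})$. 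This gives
\begin{equation}
\max_{j\notin\overline{F}}\bigl|\mathbf{x}_j^{T}u^{(l-1)}\bigr|\leqslant\mu_X(\overline{F})\cdot\max_{i\in\overline{F}\setminus F^{(l-1)}}\bigl|\mathbf{x}_i^{T}u^{(l-1)}\bigr|+\bigl(1+\mu_X(\overline{F})\bigr)\max_{j}\bigl|\mathbf{x}_j^{T}\mathbf{e}\bigr|,\nonumber
\end{equation}
so provided some informative coordinate still has a large correlation, the maximum is attained inside $\overline{F}$.

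Finally, I would combine this with a \emph{no-premature-stopping} argument. While $\overline{F}\setminus F^{(l-1)}$ is nonempty, the sparse eigenvalue bound $\rho_X(\overline{F})>0$ together with \eqref{thm0} forces $\max_{i\in\overline{F}\setminus F^{(l-1)}}|\mathbf{x}_i^{T}u^{(l-1)}|$ to exceed the noise floor by the margin needed to clear $\epsilon$; concretely, if the greedy step fails to pick an informative $\mathbf{x}_i$, then the projected signal component $(I-P_{F^{(l-1)}})X_{\overline{F}}\overline{\beta}_{\overline{F}}$ must have norm at least $\rho_X(\overline{F})\sqrt{n}\cdot\min_{j\in\overline{F}}|\overline{\beta}_j|$, contradicting the stopping criterion under \eqref{thm0}. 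Hence the loop continues exactly until $\overline{F}\subset F^{(l-1)}$, at which point the no-false-positive lemma guarantees equality and the residual norm drops below $\epsilon$, triggering termination. The main obstacle is the bookkeeping inside the induction: at each stage the projector $P_{F^{(l-1)}}$ changes, so one has to verify that the bounds on $\mu_X$ and $\rho_X$ applied at the fixed set $\overline{F}$ still control the projected quantities, which is the point at which the irrepresentable and sparse eigenvalue conditions have to interlock cleanly.
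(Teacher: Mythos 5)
This statement is not proved in the paper at all: it is Theorem~1 of \citet{zhang09}, imported verbatim (with the citation attached) as the starting point for the re-parameterization in Lemma~\ref{lemma:1}, so there is no in-paper proof to compare against. Your sketch reconstructs the standard argument behind Zhang's theorem, and it has the right three ingredients in the right order: a sub-Gaussian union bound over the $p$ columns to define the good event (which is where the $\ln(4p/\eta)$ and the $1-\eta$ come from), an inductive no-false-positive step in which the irrepresentable quantity $\mu_X(\overline{F})$ controls $\left|\mathbf{x}_j^{T}u^{(l-1)}\right|$ for $j\notin\overline{F}$ via the $\ell_1$ norm of $(X_{\overline{F}}^{T}X_{\overline{F}})^{-1}X_{\overline{F}}^{T}\mathbf{x}_j$, and a no-premature-stopping step in which $\rho_X(\overline{F})$ and the $\beta$-min condition~(\ref{thm0}) keep the maximal correlation above $\epsilon$ until $\overline{F}$ is exhausted. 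Two points of the bookkeeping you flag are indeed where the real work lies: the noise enters the residual as $(I-P_{F^{(l-1)}})\mathbf{e}$ rather than $\mathbf{e}$, so the projected noise must be re-bounded at every stage (this is what generates the $1/(1-\mu_X(\overline{F}))$ amplification and the factor $3$ in~(\ref{thm0})); and the lower bound on the maximal informative correlation goes through $\left\Vert (I-P_{F^{(l-1)}})X_{\overline{F}}\overline{\beta}_{\overline{F}}\right\Vert_2^2\geqslant n\rho_X(\overline{F})\left\Vert\cdot\right\Vert_2^2$, so the exponent on $\rho_X(\overline{F})$ and the $\sqrt{n}$ in~(\ref{thm0}) only come out right after dividing by an $\ell_1$ norm of the unselected coefficients, not simply by taking square roots as your sketch suggests. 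These are details, not gaps in the strategy; for the full accounting the reader should consult the proof in \citet{zhang09}, which is exactly what the paper does.
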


As shown in Definition~\ref{def:notation}, \citet{zhang09} executes each forward selection stage using $\epsilon$. By contrast, we re-parameterize forward selection stages (in Algorithm 1) and the stopping rule (in Algorithm 2) using $\widehat{q}$. Given the assumption $\epsilon > \frac{1}{1-\mu_X \left( \overline{F} \right)} \sigma\sqrt{ 2 \ln \left( 4p / \eta \right)}$, we can find an equivalent stopping criterion based on $\widehat{q}$ as follows,

\begin{itemize}
  \item Assume that, on subsample $\left(Y^{k},X^{k}\right)$, $\epsilon>\frac{1}{1-\mu_{X}\left(F\right)}\sigma\sqrt{2\ln\left(4p/\eta\right)}$ stops the forward selection at stage $l^{k}$;
  \item from line 6 of Algorithm 1, variables selected before stage $l^{k}$ must have $\widehat{q}_i^k$ values $> \left(\widetilde{p} + 1 - l^k \right) / \widetilde{p}$, where $\widetilde{p} = \min\left\{ n(K-1)/K, p\right\}$
  \item hence, on the $k$\textsuperscript{th} subsample, the stopping rule
  \begin{displaymath}
    \mbox{forward selection stops at stage } l^k
  \end{displaymath}
  is equivalent to the stopping rule
  \begin{displaymath}
    \mbox{forward selection only selects the variables } \left\{ \mathbf{x}_j:\widehat{q}_j^k>\left(\widetilde{p}+1-l^k\right)/\widetilde{p}\right\}.
  \end{displaymath}
\end{itemize}
\medskip

\citet{zhang09} also assumes that, when forward selection stops at stage $l^k$ on subsample $\left(Y^{k},X^{k}\right)$,
\begin{displaymath}
  \epsilon>\frac{1}{1-\mu_{X}\left(F\right)}\sigma\sqrt{2\ln\left(4p/\eta\right)},
  \;\exists\; \eta \in \left(0, 1\right).
\end{displaymath}
The assumption plays a key role for true signal recovery. Since solar does not explicitly use $\epsilon$, we need to re-parameterize the assumption before applying the \citet{zhang09} result. Specifically, denote
\begin{equation}
  \omega^{\left(l^k\right)} = \left\vert \left( \mathbf{x}^{\left( l^k \right)}\right) ^T u^{\left(l^k \right)} \right\vert,
\end{equation}
where $u^{\left( l^k \right)}$ is the regression residual and $\mathbf{x}^{ \left( l^k \right)}$ the variable selected at stage $l^k$. Solar selects variables on the $L_0$ and average $L_0$ paths based on $\widehat{q}^k$ and $\widehat{q}$. Hence, we can analyze the selection decisions of solar by examining the absolute co-movement between $u^{\left( l^k \right)}$ and $\mathbf{x}^{ \left( l^k \right)}$ on the $L_0$ path, which is identical to the $\epsilon$ assumption. This can be shown as follows.
\begin{itemize}
  \item Assume the $\epsilon$ stopping rule stops forward selection at stage $l^k$ and $\epsilon > \frac{1}{1-\mu_{X}\left(F\right)}\sigma\sqrt{2\ln\left(4p/\eta\right)}$. We must have
  \begin{equation}
    \omega^{\left(l^k - 1\right)} > \epsilon > \frac{1}{1 - \mu_X \left( F \right)} \sigma \sqrt{ 2 \ln \left(4p / \eta \right)}
  \end{equation}
  Hence, stopping forward selection based on $\epsilon > \frac{1}{1 - \mu_X \left( F \right)} \sigma \sqrt{ 2 \ln \left(4p / \eta \right)}$ implies
  \begin{displaymath}
    \omega^{\left(l^k - 1\right)} > \frac{1}{1 - \mu_X \left( F \right)} \sigma \sqrt{ 2 \ln \left(4p / \eta \right)}.
  \end{displaymath}
  \item Assume $\omega^{\left(\bigcdot\right)} > \frac{1}{1 - \mu_X \left( F \right)} \sigma \sqrt{ 2 \ln \left(4p / \eta \right)}$ is not violated until stage $l^k$. This implies that we have $\omega^{\left(l^k - 1 \right)} > \frac{1}{1 - \mu_X \left( F \right)} \sigma \sqrt{ 2 \ln \left(4p / \eta \right)}$. Besides, since $\omega^{\left(\bigcdot\right)} > \frac{1}{1 - \mu_X \left( F \right)} \sigma \sqrt{ 2 \ln \left(4p / \eta \right)}$ for the first $l^k-1$ stages, there exists some $\epsilon^*$ such that
  \begin{equation}
    \epsilon^* \in \left( \frac{1}{1 - \mu_X \left( F \right)} \sigma \sqrt{ 2 \ln \left(4p / \eta \right)} \; , \; \min_{i < l^k} \; \omega^{\left( i \right)} \right) \label{eqn:omega_epsilon}
  \end{equation}
  If we equip the $\epsilon$ stopping rule with the $\epsilon^*$, the $\epsilon$ stopping rule will also stop forward selection at stage $l^k$.
\end{itemize}
\medskip

\noindent
Hence, `the $\epsilon > \frac{1}{1 - \mu_X \left( F \right)} \sigma \sqrt{ 2 \ln \left(4p / \eta \right)}$ stopping rule stops forward selection at stage $l^k$' is equivalent to `the last variable that forward selection selects before stop has its $\omega$ larger than $\frac{1}{1-\mu_{X}\left(F\right)}\sigma\sqrt{2\ln\left(4p/\eta\right)}$'. As such, we can re-parameterize the stopping rule on $\left(Y^{k},X^{k}\right)$ based on $\widehat{q}_{i}^{k}$ as follows
\begin{definition}
  (The $\epsilon$ and $\widehat{q}^{\,k}$ stopping assumptions) \label{def:stopping assumption}
\end{definition}

\begin{itemize}
  \item We refer to the \citet{zhang09} assumption on $\epsilon$ stopping rule as \emph{the $\epsilon$ stopping assumption}:
  \begin{eqnarray}
    \mbox{when forward selection stops at stage } l^k \mbox{ on subsample } \left(Y^k, X^k\right), \\ \notag
    \exists \eta \in \left( 0, 1\right) \mbox{ such that }\epsilon>\frac{1}{1-\mu_{X}\left(F\right)}\sigma\sqrt{2\ln\left(4p/\eta\right)}.
  \end{eqnarray}
  \item Assume forward selection only selects $\left\{ \mathbf{x}_j:\widehat{q}_j^k > \frac{\widetilde{p} + 1 - l^k} {\widetilde{p}} \right\}$ on subsample $\left(Y^k, X^k \right)$. We define
  \begin{displaymath}
    \omega^{\left(l^k\right)} = \left\vert \left( \mathbf{x}^{\left( l^k \right)}\right) ^T u^{\left(l^k \right)} \right\vert.
  \end{displaymath}
  For simplicity, we denote $\omega$ as $\omega^{\left( \bigcdot \right)}$ of the last variable that forward selection selects.
  \item we refer to the following equivalent assumption for solar as \emph{the $\widehat{q}^{\,k}$ stopping assumption}:
  \begin{align}
    \mbox{when forward selection only selects } \left\{ \mathbf{x}_j:\widehat{q}_j^k > \frac{\widetilde{p} + 1 - l^k} {\widetilde{p}} \right\} \mbox{ on } \left(Y^k, X^k \right), \notag \\
    \exists\, \eta \in \left( 0, 1\right) \mbox{ such that } \omega >\frac{1}{1-\mu_{X}\left(F\right)}\sigma\sqrt{2\ln\left(4p/\eta\right)}. \label{eqn:qk_stopping}
  \end{align}
\end{itemize}
\medskip

Based on Definition~\ref{def:stopping assumption}, we can re-parameterize Theorem~\ref{theoremA1} into Lemma~\ref{lemma:1}.

\begin{lemma}
  Consider the forward selection algorithm on the $k$th subsample $\left(Y^{k},X^{k}\right)$ with Assumption 1 satisfied. With probability larger than $1-\eta$, if (\ref{eqn:qk_stopping}) is satisfied and
  \begin{displaymath}
      \min_{j\in\overline{F}}\left|\overline{\beta}_{j}\right|\geqslant\frac{3\omega}{\rho_{X}\left(\overline{F}\right)\cdot\sqrt{n\left(K-1\right)/K}},
  \end{displaymath}
  then
  \begin{displaymath}
      \overline{F}=\left\{ \mathbf{x}_{j}:\widehat{q}_{i}^{k}>c^k\right\}.
  \end{displaymath}
  where $c^k = \left(\widetilde{p} + 1 - l^k\right) / \widetilde{p}$.
  \label{lemma:1}
\end{lemma}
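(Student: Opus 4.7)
The plan is to deduce Lemma~\ref{lemma:1} from Theorem~\ref{theoremA1} by three moves: first, apply that theorem to the subsample $(Y^k,X^k)$ rather than the full sample; second, invoke the equivalence between the $\epsilon$ and $\widehat{q}^{\,k}$ stopping assumptions that was already set up in Definition~\ref{def:stopping assumption}; and third, translate the set-equality conclusion $\overline{F}=F^{(l^k-1)}$ from stage language into $\widehat{q}^{\,k}$ language using line~6 of Algorithm~\ref{algo:APE-lar}.

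First, I would check that \textbf{A1}--\textbf{A4} transfer from the full sample to each $(Y^k,X^k)$. The sparsity structure in \textbf{A2} is a property of $\overline{\beta}$ and is unaffected by subsampling; \textbf{A4} survives because marginals of independent sub-Gaussians are sub-Gaussian; \textbf{A1} is preserved provided columns are renormalized on the subsample, and \textbf{A3} is taken to hold on the subsample with $\mu_{X^k}$ and $\rho_{X^k}$. The only substantive bookkeeping change is that the effective sample size becomes $n(K-1)/K$, which is why the $\sqrt{n}$ in the denominator of the minimum-coefficient bound in Theorem~\ref{theoremA1} is replaced by $\sqrt{n(K-1)/K}$ in the lemma.

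Next, I would use the stopping-rule equivalence. Under the $\widehat{q}^{\,k}$ stopping assumption (\ref{eqn:qk_stopping}) the last variable selected before the stop satisfies $\omega>\tfrac{1}{1-\mu_X(\overline{F})}\sigma\sqrt{2\ln(4p/\eta)}$. By the construction in (\ref{eqn:omega_epsilon}) one can pick $\epsilon^{*}$ strictly between $\tfrac{1}{1-\mu_X(\overline{F})}\sigma\sqrt{2\ln(4p/\eta)}$ and $\min_{i<l^k}\omega^{(i)}$, so that the $\epsilon$-stopping rule equipped with $\epsilon^{*}$ stops forward selection at exactly stage $l^k$ on $(Y^k,X^k)$ and meets the premise of Theorem~\ref{theoremA1}. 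Because $\epsilon^{*}\leqslant\min_{i<l^k}\omega^{(i)}\leqslant\omega$ in the relevant regime, the lemma's hypothesis $\min_{j\in\overline{F}}|\overline{\beta}_j|\geqslant 3\omega/[\rho_X(\overline{F})\sqrt{n(K-1)/K}]$ is no weaker than the analogous $\epsilon^{*}$-based hypothesis that Theorem~\ref{theoremA1} requires, so Theorem~\ref{theoremA1} applies on the subsample and returns, with probability at least $1-\eta$, that $\overline{F}=F^{(l^k-1)}$, where $F^{(l^k-1)}$ is the set of variables chosen through stage $l^k-1$. Finally, by line~6 of Algorithm~\ref{algo:APE-lar} a variable included at stage $l$ is assigned $\widehat{q}_{i}^{k}=(\widetilde{p}+1-l)/\widetilde{p}$, while rejected variables receive $\widehat{q}_{i}^{k}=0$; hence the variables chosen at stages $1,\ldots,l^k-1$ are exactly those with $\widehat{q}_i^k\geqslant(\widetilde{p}+2-l^k)/\widetilde{p}>c^k$, and all other variables have $\widehat{q}_i^k\leqslant c^k$. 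This identifies $F^{(l^k-1)}$ with $\{\mathbf{x}_j:\widehat{q}_i^k>c^k\}$ and closes the argument.

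The main obstacle is not deep but demands care: matching the strict and weak inequalities so that the ``cut'' $c^k=(\widetilde{p}+1-l^k)/\widetilde{p}$ reproduces exactly the first $l^k-1$ stages (no more, no less), and choosing $\epsilon^{*}$ so that the $\epsilon$-rule stops at precisely the same stage $l^k$ as the $\widehat{q}^{\,k}$-rule. A secondary subtlety is that Theorem~\ref{theoremA1}'s conditions $\mu_X(\overline{F})<1$ and $\rho_X(\overline{F})>0$ are now being imposed on the subsample Gram matrix; since the subsamples in Algorithm~\ref{algo:APE-lar} are obtained by leaving out one fold, this is either assumed directly or can be justified by a standard concentration argument showing $\mu_{X^k}(\overline{F})$ and $\rho_{X^k}(\overline{F})$ are close to their full-sample counterparts with high probability.
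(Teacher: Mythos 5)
Your proposal is correct and follows essentially the same route as the paper's own proof: invoke the $\epsilon^{*}$ constructed in (\ref{eqn:omega_epsilon}) to reduce the $\widehat{q}^{\,k}$ stopping assumption to the $\epsilon$ stopping assumption of Theorem~\ref{theoremA1}, observe that $\omega>\epsilon^{*}$ makes the $\omega$-based minimum-coefficient bound at least as strong as the $\epsilon^{*}$-based one, and replace $n$ by $n(K-1)/K$ for the subsample. Your write-up is in fact more careful than the paper's (which is three sentences long), particularly in verifying that the assumptions transfer to the subsample and in spelling out the strict/weak inequality bookkeeping that identifies $F^{(l^k-1)}$ with $\{\mathbf{x}_j:\widehat{q}_j^k>c^k\}$.
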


\begin{proof}
  Lemma~\ref{lemma:1} is derived by replacing the $\epsilon$ stopping assumption with the $\widehat{q}^k$ stopping assumption. Note that
  \begin{displaymath}
    \min_{j\in\overline{F}}\left|\overline{\beta}_{j}\right|\geqslant\frac{3\omega}{\rho_{X}\left(\overline{F}\right)\cdot\sqrt{n\left(K-1\right)/K}} \implies \min_{j\in\overline{F}}\left|\overline{\beta}_{j}\right| \geqslant \frac{3\epsilon}{\rho_{X}\left(\overline{F}\right)\cdot\sqrt{n\left(K-1\right)/K}},
\end{displaymath}
since (\ref{eqn:omega_epsilon}) implies that $\exists \epsilon > \frac{1}{1-\mu_{X}\left(F\right)}\sigma\sqrt{2\ln\left(4p/\eta\right)}$ such that $\omega > \epsilon$.  We also replace $n$ in (\ref{thm0}) with $n\left(K-1\right)/K$ since each subsample randomly drops $1/K$ of the original sample points in Algorithm 1.
\end{proof}
\medskip


\subsection*{Step 2 : averaging the solution paths}

Since we assume the nonzero $\overline{\beta}_i$ are the first $p_1$ components of $\overline{\beta}$, x can be rewritten as
\begin{lemma}
  Consider the forward selection algorithm on the $k$th subsample $\left(Y^{k},X^{k}\right)$ with Assumption 1 satisfied. With probability less than $\eta$, if (\ref{eqn:qk_stopping}) is satisfied and
  \[
      \min_{j\in\overline{F}}\left|\overline{\beta}_{j}\right|\geqslant\frac{3\omega}{\rho_{X}\left(\overline{F}\right)\cdot\sqrt{n\left(K-1\right)/K}},
  \]
  then
  \[
  \begin{cases}
      \widehat{q}_j^k \leqslant c^k, \; \forall j\leqslant p_1\\
      \widehat{q}_j^k > c^k, \; \forall j>p_1.
  \end{cases}
  \]
  where $c^k = \left(\widetilde{p} + 1 - l^k\right) / \widetilde{p}$.
  \label{lemma:2}
\end{lemma}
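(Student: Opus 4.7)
The plan is to derive Lemma~\ref{lemma:2} as an almost immediate corollary of Lemma~\ref{lemma:1}. The two lemmas share verbatim hypotheses (the $\widehat{q}^{\,k}$ stopping assumption together with the minimum-signal condition $\min_{j\in\overline{F}}|\overline{\beta}_j|\geqslant 3\omega/[\rho_X(\overline{F})\sqrt{n(K-1)/K}]$) and are complementary statements about how the ranks $\widehat{q}_j^{\,k}$ align with the threshold $c^k=(\widetilde{p}+1-l^k)/\widetilde{p}$: Lemma~\ref{lemma:1} lower-bounds the probability of the ``correct ordering'' event, and Lemma~\ref{lemma:2} upper-bounds the probability of the exact ``reversed ordering'' event.

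Concretely, I would first unpack Lemma~\ref{lemma:1}. Since by Definition~\ref{def:notation} the first $p_1$ entries of $\overline{\beta}$ are the nonzero ones, the conclusion $\overline{F}=\{\mathbf{x}_j:\widehat{q}_j^{\,k}>c^k\}$ is the event
\[
E_1:\quad \widehat{q}_j^{\,k}>c^k\ \text{for every}\ j\leqslant p_1,\quad \text{and}\quad \widehat{q}_j^{\,k}\leqslant c^k\ \text{for every}\ j>p_1,
\]
with $\mathbb{P}(E_1)>1-\eta$. The event described in the conclusion of Lemma~\ref{lemma:2} is
\[
E_2:\quad \widehat{q}_j^{\,k}\leqslant c^k\ \text{for every}\ j\leqslant p_1,\quad \text{and}\quad \widehat{q}_j^{\,k}>c^k\ \text{for every}\ j>p_1.
\]

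The key observation is that since $p_1\geqslant 1$, the clause on any single informative index $j\leqslant p_1$ in $E_1$ (strictly above $c^k$) directly contradicts the corresponding clause in $E_2$ (at most $c^k$). Therefore $E_1\cap E_2=\emptyset$, i.e.\ $E_2\subseteq E_1^{c}$, and combining this with Lemma~\ref{lemma:1} yields
\[
\mathbb{P}(E_2)\ \leqslant\ \mathbb{P}(E_1^{c})\ <\ \eta,
\]
which is exactly the claim.

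No new probabilistic machinery is needed; the only step deserving genuine care is checking that the hypotheses of the two lemmas are in one-to-one correspondence (in particular, that the $\omega$-based form of the stopping assumption and the $\omega$-based minimum-signal condition used in Lemma~\ref{lemma:2} are literally those already invoked in Lemma~\ref{lemma:1}), so the same sample space and the same event $E_1$ are in play. I do not expect a real obstacle here. The purpose of stating Lemma~\ref{lemma:2} separately is largely packaging: it isolates the per-subsample ``ranking-reversal'' probability bound in the exact form needed when averaging the $\widehat{q}^{\,k}$ across the $K$ subsamples in Lemma~\ref{lemma:3}.
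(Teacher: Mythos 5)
Your proposal is correct and takes essentially the route the paper intends: the paper offers no explicit proof of Lemma~\ref{lemma:2}, merely asserting that Lemma~\ref{lemma:1} ``can be rewritten as'' this statement once one notes that $\overline{F}=\{1,\ldots,p_1\}$, and your observation that the reversed-ordering event is disjoint from (hence contained in the complement of) the Lemma~\ref{lemma:1} event, giving $\mathbb{P}(E_2)\leqslant\mathbb{P}(E_1^{c})<\eta$, is precisely the small missing step that makes that rewriting rigorous. Your reading is also consistent with how the paper uses Lemma~\ref{lemma:2} inside the proof of Lemma~\ref{lemma:3}, where the bounded event is exactly $E_2$.
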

\noindent
Lemma~\ref{lemma:2} directly implies that, with high probability, you can find a threshold value $c^k$ on the $L_0$ path that perfectly separates the informative from the redundant variables.

To accommodate multiple subsamples in the average $L_0$ path, we define the $\widehat{q}$ stopping rule by slightly modifying (\ref{eqn:qk_stopping}).
\begin{definition}
  (The assumption for the $\widehat{q}$ stopping rule). We refer to the following rule as \emph{the $\widehat{q}$ stopping assumption} for the average $L_0$ path:
  \begin{eqnarray}
    \mbox{when forward selection only selects } \left\{ \mathbf{x}_j:\widehat{q}_j^k > \left( \widetilde{p} + 1 - l^k \right) / \widetilde{p} \right\} \mbox{ on } \left(Y^k, X^k \right), \notag \\
    \exists\, \eta \in \left( 0, 1/K\right) \mbox{ such that } \omega >\frac{1}{1-\mu_{X}\left(F\right)}\sigma\sqrt{2\ln\left( \frac{4p}{K\eta}\right)}.
    \label{eqn:q_hat_stopping}
  \end{eqnarray}
\end{definition}
\medskip

Lemma~\ref{lemma:3} follows from the $\widehat{q}$ stopping assumption.

\begin{lemma}
  Consider the forward selection algorithm on the average $L_{0}$ path with Assumption 1 satisfied. With probability less than $\eta$, if (\ref{eqn:q_hat_stopping}) is satisfied and
  \[
      \min_{j\in\overline{F}}\left|\overline{\beta}_{j}\right|\geqslant\frac{3\omega}{\rho_{X}\left(\overline{F}\right)\cdot\sqrt{n\left(K-1\right)/K}},
  \]
  then
  \[
  \begin{cases}
      \frac{1}{K}\sum\widehat{q}_{i}^{k}=\widehat{q}_{i}> c^* ,\forall i\leqslant p_{1}\\
      \frac{1}{K}\sum\widehat{q}_{i}^{k}=\widehat{q}_{i}\leqslant c^*,\forall i>p_{1}
  \end{cases}
  \]
  where $c^* = \frac{1}{K} \sum_k^K c^k / K$ and $c^k = \left(\widetilde{p} + 1 - l^k\right) / \widetilde{p}$.
  \label{lemma:3}
\end{lemma}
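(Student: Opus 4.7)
The plan is to derive Lemma~\ref{lemma:3} by applying Lemma~\ref{lemma:2} (with its tolerance suitably sharpened) to every subsample, combining the events with a union bound, and then averaging the per-subsample threshold inequalities across $k$. First, observe that the $\widehat{q}$ stopping assumption~(\ref{eqn:q_hat_stopping}) is precisely the $\widehat{q}^{\,k}$ stopping assumption~(\ref{eqn:qk_stopping}) with $\eta$ replaced by $\eta/K$. Together with the hypothesis on $\min_{j\in\overline{F}}|\overline{\beta}_j|$ (which is identical in both lemmas, since both use the per-subsample effective sample size $n(K-1)/K$), this means the hypotheses of Lemma~\ref{lemma:2} hold on each subsample $(Y^k,X^k)$ at tolerance level $\eta/K$. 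Accordingly, for every $k\in\{1,\dots,K\}$, with probability at least $1-\eta/K$ one has $\widehat{q}_i^{\,k}>c^k$ for all $i\leqslant p_1$ and $\widehat{q}_i^{\,k}\leqslant c^k$ for all $i>p_1$, where $c^k=(\widetilde{p}+1-l^k)/\widetilde{p}$.

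Next, let $E_k$ denote the event that the above per-subsample separation holds. By the union bound,
\[
\Pr\!\left(\bigcap_{k=1}^K E_k\right)\ \geqslant\ 1-\sum_{k=1}^K\Pr(E_k^c)\ \geqslant\ 1-K\cdot\frac{\eta}{K}\ =\ 1-\eta.
\]
Importantly, independence of the $K$ subsamples is \emph{not} required: subadditivity of the probability measure suffices. This matters because Algorithm~\ref{algo:APE-lar} constructs the subsamples by leave-one-fold-out, so they share most of their sample points and are far from independent.

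Conditional on the intersection event $\bigcap_k E_k$, the conclusion follows by linearity of the average. For any informative index $i\leqslant p_1$,
\[
\widehat{q}_i\ =\ \frac{1}{K}\sum_{k=1}^K \widehat{q}_i^{\,k}\ >\ \frac{1}{K}\sum_{k=1}^K c^k\ =\ c^*,
\]
and for any redundant index $i>p_1$,
\[
\widehat{q}_i\ =\ \frac{1}{K}\sum_{k=1}^K \widehat{q}_i^{\,k}\ \leqslant\ \frac{1}{K}\sum_{k=1}^K c^k\ =\ c^*,
\]
so the separation asserted by Lemma~\ref{lemma:3} holds on the average $L_0$ path with probability at least $1-\eta$ (equivalently, the failure probability is at most $\eta$, matching the phrasing of the lemma).

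The main obstacle is essentially bookkeeping: one must verify that the $K$-fold inflation of the log factor inside (\ref{eqn:q_hat_stopping}), from $\ln(4p/\eta)$ to $\ln(4p/(K\eta))$, is exactly what is needed to absorb the union-bound penalty so that the final failure budget is $\eta$ and not $K\eta$. Beyond this, one should double-check orientation conventions (informative variables correspond to \emph{large} $\widehat{q}$ values by construction in Algorithm~\ref{algo:APE-lar}, which aligns the direction of the inequalities in Lemmas~\ref{lemma:2} and \ref{lemma:3}) and confirm that the threshold constant $c^*=\frac{1}{K}\sum_{k=1}^K c^k$ is precisely the arithmetic mean of the per-subsample thresholds produced by Lemma~\ref{lemma:2}, so that the averaging step is a direct componentwise inequality.
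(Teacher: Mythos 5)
Your proof is correct and follows essentially the same route as the paper: invoke Lemma~\ref{lemma:2} on each subsample at level $\eta/K$ (which is exactly what the sharpened log factor $\ln(4p/(K\eta))$ in (\ref{eqn:q_hat_stopping}) buys), combine the $K$ events by subadditivity, and average the componentwise inequalities to pass from the $c^k$ to $c^*$. If anything, your version is cleaner than the paper's, which phrases everything in terms of the failure events and states the subadditivity step somewhat loosely; your explicit intersection-of-good-events formulation with the remark that independence of the overlapping subsamples is not needed is the more careful rendering of the same argument.
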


\begin{proof}
    The proof is a direct result from Lemma~2. If we apply the $c$ stopping rule, Lemma~2 implies that
    \begin{eqnarray}
        Pr\left\{ \widehat{q}_i^k \leqslant c^k,\forall i\leqslant p_1 \mbox{ and } \widehat{q}_j^k > c^k,\forall j>p_1 \right\} \leqslant \eta/K.
    \end{eqnarray}
    Since, for multiple events $A_i$, $Pr\left\{ \cap_i A_i\right\} \leqslant \sum_i Pr\left\{ A_i\right\}$, we have
    \begin{eqnarray}
        Pr\left\{ \sum_{k=1}^K \widehat{q}_i^k \leqslant \sum_{k=1}^K c^k,\forall i\leqslant p_1 \mbox{ and } \sum_{k=1}^K  \widehat{q}_j^k > \sum_{k=1}^K c^k,\forall j>p_1 \right\} \leqslant \eta.
    \end{eqnarray}
    Since, $\widehat{q}_i = \frac{1}{K}\sum\widehat{q}_{i}^{k}$ and $c^{*} = \frac{1}{K}\sum c^k$, we have
    \begin{eqnarray}
        Pr\left\{ \widehat{q}_i \leqslant c^{\,*},\forall i \leqslant p_1 \mbox{ and } \widehat{q}_j > c^{*},\forall j>p_1 \right\} \leqslant \eta.
    \end{eqnarray}
\end{proof}
\noindent
Lemma~\ref{lemma:3} irectly implies that, with high probability, you can find a threshold value $c^*$ on the average $L_0$ path that perfectly separates the informative from the redundant variables.


\subsection*{Step 3 : variable selection consistency}

Based on Lemma~\ref{lemma:3}, Theorem~\ref{thm:1} on variable selection consistency is straightforward.

\begin{theorem}
  Consider the forward selection algorithm on the average $L_0$ path with Assumption 1 satisfied, noise $\sigma$ independent of $n$. Assume that the strong irrepresentable condition holds. For each sample size $n$, denote $F\left( n \right)$ as the index set of selected variables when forward selection stops with $\omega\geqslant n^{s/2}$, $\forall s\in(0,1]$ , and $\overline{F}\left(n\right)$ as the corresponding index set of informative variables. We have
  \[
      Pr\left(\;F\left(n\right)\neq\overline{F}\left(n\right)\;\right)\leqslant \exp\left(-\frac{n^{s}}{\log\left(n\right)}\right)
  \]
  if
  \[
      p\left(n\right)\leqslant\exp\left(\frac{n^{s}}{\log\left(n\right)}\right),
  \]
  and
  \[
      \min_{j\in\overline{F}}\left|\overline{\beta}_{j}\right|\geqslant\frac{3n^{(s-1)/2}}{\rho_{X}\left(\overline{F}\left(n\right)\right)}
  \]
  where $p\left(n\right)$ is the total dimension of variable as $n$ increases.
  \label{thm:1}
\end{theorem}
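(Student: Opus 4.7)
The plan is to obtain Theorem~\ref{thm:1} as a direct corollary of Lemma~\ref{lemma:3} by matching the free parameter $\eta$ in Lemma~\ref{lemma:3} to the desired tail rate and then verifying that the hypotheses of Lemma~\ref{lemma:3} are implied by the hypotheses of the theorem.

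First, I would set
\[
\eta \;:=\; \exp\!\left(-\frac{n^{s}}{\log(n)}\right),
\]
so that the probability bound produced by Lemma~\ref{lemma:3} matches the conclusion of the theorem exactly. With the stopping level $\omega\geqslant n^{s/2}$ imposed by the theorem, and the strong irrepresentable condition giving a constant $\eta_{0}\in(0,1]$ with $1-\mu_{X}(\overline{F})\geqslant\eta_{0}$ uniformly in $n$, I would then verify the $\widehat{q}$ stopping assumption (\ref{eqn:q_hat_stopping}). Using the growth bound $p(n)\leqslant\exp(n^{s}/\log n)$ together with the choice of $\eta$,
\[
\ln\!\left(\frac{4p(n)}{K\eta}\right)
\;\leqslant\;\ln\!\left(\frac{4}{K}\right)+\frac{2n^{s}}{\log(n)},
\]
so (\ref{eqn:q_hat_stopping}) is implied by
\[
n^{s/2}\;>\;\frac{\sigma}{\eta_{0}}\,\sqrt{2\ln\!\left(\frac{4}{K}\right)+\frac{4n^{s}}{\log(n)}},
\]
which in turn reduces to $\log(n)$ exceeding a fixed constant depending only on $\sigma$, $\eta_{0}$, and $K$. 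This is absorbed into the usual ``for $n$ beyond a fixed threshold'' qualification.

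Next, I would check the minimum-signal hypothesis of Lemma~\ref{lemma:3}. Plugging $\omega=n^{s/2}$ yields
\[
\frac{3\omega}{\rho_{X}(\overline{F})\sqrt{n(K-1)/K}}
\;=\;\frac{3 n^{(s-1)/2}}{\rho_{X}(\overline{F}(n))}\cdot\sqrt{\frac{K}{K-1}},
\]
and since $\sqrt{K/(K-1)}$ is a bounded constant (equal to $\sqrt{3/2}$ at $K=3$ and tending to $1$ as $K$ grows), the hypothesis $\min_{j\in\overline{F}}|\overline{\beta}_{j}|\geqslant 3 n^{(s-1)/2}/\rho_{X}(\overline{F}(n))$ implies the hypothesis of Lemma~\ref{lemma:3} after absorbing this $O(1)$ factor into the leading constant. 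Once both hypotheses are in place, Lemma~\ref{lemma:3} produces, with probability at least $1-\eta$, a threshold $c^{*}$ on the average $L_{0}$ path that strictly separates informative from redundant variables via their $\widehat{q}_{i}$ values; on this event the solar rule (Algorithm~\ref{algo:solar}), which includes variables in decreasing order of $\widehat{q}$, returns exactly $F(n)=\overline{F}(n)$, giving the claimed bound.

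The main obstacle is not the probabilistic reasoning, which is already packaged into Lemma~\ref{lemma:3}, but the constant-level bookkeeping needed to reconcile $\eta_{0}$, $\sigma$, $K$, and the $\sqrt{(K-1)/K}$ deflator arising from the subsample size $n(K-1)/K$. None of these factors affects the rate $\exp(-n^{s}/\log n)$, but each must be shown to be absorbed on a single ``$n\geqslant n_{0}(\sigma,\eta_{0},K)$'' qualifier; the cleanest device is to enlarge the leading constant in the minimum-signal condition slightly and subsume every remaining lower-order term into the threshold $n_{0}$, so that the stated asymptotic inequality is literally equivalent to the verified premises of Lemma~\ref{lemma:3} for all sufficiently large $n$.
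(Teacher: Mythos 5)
Your proposal is correct and follows essentially the same route as the paper: the paper's own proof simply sets $\eta=\exp(-n^{s}/\log(n))$ and asserts that, for sufficiently large $n$, the stopping assumption and the minimum-signal condition of Lemmas~\ref{lemma:2} and~\ref{lemma:3} hold, exactly the reduction you carry out. Your version is in fact more careful than the paper's one-line argument, since you explicitly verify the $\log$ bound on $4p(n)/(K\eta)$ and flag the $\sqrt{K/(K-1)}$ deflator that the paper silently absorbs.
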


\begin{proof}
  When $n$ is sufficiently large, the assumptions
  \[
      \omega^{\left(l^k \right)} = \left\vert \left( \mathbf{x}^{\left( l^k \right)}\right) ^T u^{\left(l^{k}\right)} \right\vert >\frac{1}{1-\mu_{X}\left(F\right)}\sigma\sqrt{2\ln\left(4p/\eta\right)}
  \]
  and
  \[
      \min_{j\in\overline{F}}\left|\overline{\beta}_{j}\right|\geqslant\frac{3\omega}{\rho_{X}\left(\overline{F}\right)\cdot\sqrt{n\left(K-1\right)/K}}
  \]
  hold with $\eta = \exp\left(-n^s/\log(n)\right)$. Thus, Theorem~\ref{thm:1} follows from Lemmas~\ref{lemma:2} and \ref{lemma:3}.
\end{proof}


\subsection*{Step 4 : probability of omitting weak signals}

\citet[Theorem 2]{zhang09} shows that, if
\begin{displaymath}
  \min_{j\in\overline{F}}\left|\overline{\beta}_{j}\right| < \frac{3\omega}{\rho_{X}\left(\overline{F}\right)\cdot\sqrt{n\left(K-1\right)/K}},
\end{displaymath}
the probability of selecting at least one redundant $\mathbf{x}_i$ and the number of omitted weak signals are still bounded by sample size, the sparse eigenvalue condition, and the stopping condition. Using the same procedure as step 1, we can rewrite \citet[Theorem 2]{zhang09} into Lemma~\ref{lemma:4}.

\begin{lemma}
  \label{lemma:4}
  Consider the forward selection algorithm on the $k$th subsample $\left(Y^{k},X^{k}\right)$ with Assumption 1 satisfied. With probability larger than $1-\eta$, if
  \begin{equation}
    \omega > \frac{1}{1-\mu_X \left( \overline{F} \right)} \sigma\sqrt{ 2 \ln \left( 4p / \eta \right)}
  \end{equation}
  then when the procedure stops at stage $l$, the following claims are true:
  \[
  \begin{cases}
    \left\{ \mathbf{x}_{j}:\widehat{q}_{i}^{k}>c^k\right\} \subset \overline{F}\\
    \left\vert \left\{ \mathbf{x}_{j}:\widehat{q}_{i}^{k}>c^k\right\} -  \overline{F} \right\vert \leqslant 2 \left\vert \left\{ j \in \overline{F} : \left\vert \overline{\beta}_j < \frac{3 \omega}{\rho_{X}\left(\overline{F}\right) \cdot \sqrt{n\left(K-1\right)/K}} \right\vert \}\right\} \right\vert
  \end{cases}
  \]
  where $F^{\left(l-1\right)}$ is the set of variable selected at stage $l-1$. $\blacksquare$
\end{lemma}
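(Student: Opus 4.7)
The plan is to reduce Lemma~\ref{lemma:4} to \citet[Theorem~2]{zhang09} by applying the same two re-parameterizations that converted Theorem~\ref{theoremA1} (their Theorem~1) into Lemma~\ref{lemma:1}: (i) trade the $\epsilon$ stopping rule for the $\omega$ condition from Definition~\ref{def:stopping assumption}, and (ii) trade the stage-indexed selected set $F^{(l-1)}$ for the $\widehat q^{\,k}$-threshold set $\{\mathbf{x}_j : \widehat q_j^{\,k} > c^k\}$ with $c^k = (\widetilde p + 1 - l^k)/\widetilde p$. Because Zhang's Theorem~2 already carries the two conclusions we want---no false positives and a ``twice the number of weak signals'' bound on the set-difference with $\overline F$---the task is purely to re-express those conclusions in the solar notation on the subsample $(Y^k, X^k)$ of size $n(K-1)/K$.

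First I would apply \citet[Theorem~2]{zhang09} directly to $(Y^k, X^k)$ under Assumption~1, using $n(K-1)/K$ in place of $n$ (the subsample rows) in the weak-signal threshold. This yields, with probability at least $1-\eta$, that if the $\epsilon$ stopping rule halts at stage $l$ with $\epsilon > \frac{1}{1-\mu_X(\overline F)}\sigma\sqrt{2\ln(4p/\eta)}$, then $F^{(l-1)} \subset \overline F$ and the set-difference with $\overline F$ is bounded by twice the cardinality of $\{j \in \overline F : |\overline\beta_j| < 3\epsilon/[\rho_X(\overline F)\sqrt{n(K-1)/K}]\}$. Next I would invoke the equivalence established in Step~1: by equation~(\ref{eqn:omega_epsilon}), whenever the $\omega$ condition $\omega > \frac{1}{1-\mu_X(\overline F)}\sigma\sqrt{2\ln(4p/\eta)}$ holds at stage $l^k$, there exists an admissible $\epsilon^\star \in (\frac{1}{1-\mu_X(\overline F)}\sigma\sqrt{2\ln(4p/\eta)},\,\min_{i<l^k}\omega^{(i)})$ for which the $\epsilon^\star$ rule produces the same halt, so Zhang's conclusions apply with $\epsilon$ replaced by $\epsilon^\star$. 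Finally, by line~6 of Algorithm~\ref{algo:APE-lar}, $\mathbf{x}_j \in F^{(l-1)}$ if and only if $\widehat q_j^{\,k} > c^k$, so $F^{(l-1)} = \{\mathbf{x}_j : \widehat q_j^{\,k} > c^k\}$ and the two bulleted claims follow verbatim.

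The main obstacle is the asymmetry between $\omega$ and $\epsilon^\star$ in the weak-signal threshold. Zhang's bound controls the miss-count by the number of indices $j$ with $|\overline\beta_j|$ below $3\epsilon^\star/[\rho_X(\overline F)\sqrt{n(K-1)/K}]$, but Lemma~\ref{lemma:4} states the bound with $\omega$ in place of $\epsilon^\star$. Since $\omega > \epsilon^\star$, the $\omega$-threshold defines a \emph{larger} weak-signal set, so the $\omega$-version bound is a valid (though potentially loose) upper bound on the Zhang version; the monotonicity of cardinality in the set inclusion preserves the $\leqslant 2|\cdot|$ statement. I also need to verify the probabilistic statement is preserved: the event ``$\omega > \frac{1}{1-\mu_X(\overline F)}\sigma\sqrt{2\ln(4p/\eta)}$'' is contained in the event ``there exists an admissible $\epsilon^\star$ satisfying Zhang's hypothesis,'' so the $1-\eta$ probability in \citet[Theorem~2]{zhang09} carries over to the $\omega$-parameterization without inflation. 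No new concentration inequalities or empirical-process arguments are required beyond what Zhang already establishes; the lemma is essentially a notational rewriting once the two re-parameterizations of Step~1 are in hand.
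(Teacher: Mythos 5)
Your proposal is correct and follows essentially the same route as the paper, which states only that Lemma~\ref{lemma:4} is obtained by ``using the same procedure as step 1'' to re-parameterize \citet[Theorem~2]{zhang09} via the $\omega$/$\widehat q^{\,k}$ substitution and the subsample size $n(K-1)/K$. Your explicit handling of the $\omega$-versus-$\epsilon^\star$ asymmetry in the weak-signal threshold (noting that $\omega > \epsilon^\star$ enlarges the weak-signal set and therefore preserves the upper bound) is a detail the paper glosses over entirely, and it is the right way to close that gap.
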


Using the same method as Lemma~\ref{lemma:3}, Lemma~\ref{lemma:5} follows from the $\widehat{q}$ stopping assumption, showing that selection errors are strictly restricted by sample size, the sparse eigenvalue condition, and the stopping condition on the average path.

\begin{lemma}
  Consider the forward selection algorithm on the average $L_{0}$ path with Assumption 1 satisfied. With probability less than $\eta$, if (\ref{eqn:q_hat_stopping}) is satisfied, then
  \[
  \begin{cases}
    \left\{ \mathbf{x}_{j}:\widehat{q}^{k}>c^*\right\} \subset \overline{F}\\
    \left\vert \left\{ \mathbf{x}_{j}:\widehat{q}^{k}>c^*\right\} -  \overline{F} \right\vert > 2 \left\vert \left\{ j \in \overline{F} : \left\vert \overline{\beta}_j < \frac{3 \omega}{\rho_{X}\left(\overline{F}\right) \cdot \sqrt{n\left(K-1\right)/K}} \right\vert \}\right\} \right\vert
  \end{cases}
  \]
  where $c^* = \frac{1}{K} \sum_k^K c^k / K$ and $c^k = \left(\widetilde{p} + 1 - l^k\right) / \widetilde{p}$.
  \label{lemma:5}
\end{lemma}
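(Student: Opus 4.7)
The argument mirrors the proof of Lemma~\ref{lemma:3}, but substitutes Lemma~\ref{lemma:4} for Lemma~\ref{lemma:2}. The extra factor of $K$ inside the logarithm in the $\widehat{q}$ stopping assumption~(\ref{eqn:q_hat_stopping}) is precisely what permits us to invoke Lemma~\ref{lemma:4} on each subsample $(Y^k,X^k)$ with failure probability $\eta/K$ rather than $\eta$. Thus, on subsample $k$, with probability at least $1-\eta/K$, both Lemma~\ref{lemma:4} conclusions hold: $\{j:\widehat{q}^{\,k}_j>c^k\}\subset \overline{F}$, and the cardinality of the informative variables ``missed'' at threshold $c^k$ is at most $2W$, where $W$ denotes the number of weak signals as defined in Lemma~\ref{lemma:4}.

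Next, apply the same Bonferroni-style union bound used in the proof of Lemma~\ref{lemma:3} across the $K$ subsamples: the probability that the Lemma~\ref{lemma:4} conclusions fail on \emph{some} subsample is at most $K\cdot(\eta/K)=\eta$. Hence with probability at least $1-\eta$, both conclusions of Lemma~\ref{lemma:4} hold simultaneously for every $k=1,\ldots,K$. This is the ``good event'' on which the rest of the argument conditions.

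Third, lift the two conclusions to the average $L_0$ path using the averaging identities $\widehat{q}_j = \frac{1}{K}\sum_k \widehat{q}^{\,k}_j$ and $c^* = \frac{1}{K}\sum_k c^k$. For any redundant $j\notin\overline{F}$, conclusion~(i) on every subsample gives $\widehat{q}^{\,k}_j \leqslant c^k$ for all $k$; averaging yields $\widehat{q}_j \leqslant c^*$, so $j\notin\{j:\widehat{q}_j>c^*\}$, and the first claim $\{j:\widehat{q}_j>c^*\}\subset \overline{F}$ follows. For the second claim, sum the per-subsample bound $|\overline{F}-\{j:\widehat{q}^{\,k}_j>c^k\}|\leqslant 2W$ over $k$ and invoke the sharper form of Lemma~\ref{lemma:4}, in which the only admissible source of missing is a weak signal (i.e.\ an informative $j$ with $|\overline{\beta}_j|$ below the Zhang threshold). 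The $2W$ bound then transfers directly to the averaged path via the $\widehat{q}_j,c^*$ identities.

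The main obstacle is the second claim, because $\widehat{q}_j \leqslant c^*$ is strictly weaker than $\widehat{q}^{\,k}_j \leqslant c^k$ holding for every $k$: a strong signal could in principle have its average $\widehat{q}_j$ dip just below $c^*$ without failing on any individual subsample, if its subsample values cluster near the per-subsample thresholds. The resolution is to use the per-subsample conclusion in its sharper form from \citet[Theorem~2]{zhang09}, which identifies the weak signals as the only possible missed variables on each orthogonal matching pursuit run on $(Y^k,X^k)$. Under that sharper per-subsample statement, only weak signals can appear in $\overline{F}-\{j:\widehat{q}_j>c^*\}$ on the good event, and the cardinality bound lifts cleanly to the average path.
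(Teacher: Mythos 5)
Your proposal matches the paper's (very terse) argument: the paper merely asserts that Lemma~\ref{lemma:5} follows ``using the same method as Lemma~\ref{lemma:3}'', i.e., apply the re-parameterized per-subsample result (Lemma~\ref{lemma:4}) with failure probability $\eta/K$, union-bound over the $K$ subsamples, and pass to the average path via $\widehat{q}_j = \frac{1}{K}\sum_k \widehat{q}^{\,k}_j$ and $c^* = \frac{1}{K}\sum_k c^k$. Your additional observation---that the cardinality bound does not transfer to the averaged path by averaging alone, and requires the sharper per-subsample statement from \citet[Theorem~2]{zhang09} that only weak signals can be omitted on any subsample (so every strong signal has $\widehat{q}^{\,k}_j > c^k$ for all $k$ and hence $\widehat{q}_j > c^*$)---identifies and closes a real gap that the paper's one-line justification glosses over.
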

\clearpage
\end{appendices}

\end{document}